\documentclass{article}

\PassOptionsToPackage{numbers, compress}{natbib}
\usepackage[main, final]{neurips_2026}

\usepackage[utf8]{inputenc} %
\usepackage[T1]{fontenc}    %
\usepackage{hyperref}       %
\usepackage{url}            %
\usepackage{booktabs}       %
\usepackage{amsfonts}       %
\usepackage{nicefrac}       %
\usepackage{microtype}      %
\usepackage[table]{xcolor}  %
\usepackage{amsthm}
\usepackage{amsmath}
\usepackage{amssymb}
\usepackage{mathtools}
\usepackage{bm}
\usepackage{bbm}
\usepackage{enumerate}
\usepackage{multicol}
\usepackage{multirow}
\usepackage[capitalize,noabbrev,nameinlink]{cleveref}
\usepackage{minitoc}
\usepackage{apptools}
\usepackage{chngcntr}
\usepackage{enumitem}
\usepackage{algorithm}
\usepackage{algpseudocode}
\usepackage{array}
\usepackage{wrapfig}
\usepackage{bbding}
\usepackage{pifont}
\usepackage{graphicx}
\usepackage{subcaption}
\usepackage{tikz}
\usetikzlibrary{shapes, positioning, arrows.meta}
\usepackage{anyfontsize}

\AtAppendix{\counterwithin{theorem}{section}}

\hypersetup{
  breaklinks   = true, %
  colorlinks   = true, %
  urlcolor     = blue, %
  linkcolor    = cyan, %
  citecolor    = teal, %
}

\theoremstyle{plain}
\newtheorem{theorem}{Theorem}

\newtheorem{lemma}{Lemma}

\theoremstyle{definition}

\newtheorem{assumption}{Assumption}
\theoremstyle{remark}

\newtheoremstyle{example_style}%
  {3pt}      %
  {2pt}      %
  {\itshape} %
  {}         %
  {\bfseries}%
  {.}        %
  { }        %
  {}         %
\theoremstyle{example_style}

\newtheorem*{example*}{Example}

\crefname{definition}{Definition}{Definitions}
\crefname{theorem}{Theorem}{Theorems}
\crefname{lemma}{Lemma}{Lemmas}
\crefname{corollary}{Corollary}{Corollaries}
\crefname{remark}{Remark}{Remarks}
\crefname{problem}{Problem}{Problems}
\crefname{fact}{Fact}{Facts}
\crefname{proposition}{Proposition}{Propositions}
\crefname{example}{Example}{Examples}
\crefname{simulation}{Simulation}{Simulations}
\crefname{listthm}{Theorem}{Theorems}
\newlist{thmlist}{enumerate}{1}
\setlist[thmlist]{label=(\alph*), ref=\thetheorem\,(\alph*)}
\crefalias{thmlisti}{listthm}
\newlist{lemlist}{enumerate}{1}
\setlist[lemlist]{label=(\alph*), ref=\thelemma\,(\alph*)}
\crefname{lemlisti}{Lemma}{Lemmas}

\crefname{section}{Section}{Sections}
\crefname{figure}{Figure}{Figures}
\crefname{table}{Table}{Tables}
\crefname{algorithm}{Algorithm}{Algorithms}
\crefname{appendix}{Appendix}{Appendices}

\definecolor{ao}{rgb}{0.0, 0.5, 0.0}
\definecolor{highlight}{RGB}{255, 243, 205}

\newcommand{\mbb}[1]{\mathbb{#1}}
\newcommand{\mcal}[1]{\mathcal{#1}}

\newcommand{\R}{\mathbb{R}}
\makeatletter

\newcommand{\argmax}{\mathop{\mathrm{argmax}}}
\newcommand{\argmin}{\mathop{\mathrm{argmin}}}

\newcommand{\indep}{\perp \!\!\! \perp}

\def\Dice{\text{Dice}}

\title{On the Relaxation of Conditional Independence Assumption for Image Segmentation}

\author{%
  Zixun Wang \\
  Department of Statistics and Data Science\\
  The Chinese University of Hong Kong\\
  \texttt{1155225012@link.cuhk.edu.hk} \\
  \And
  Ben Dai \\
  Department of Statistics and Data Science\\
  The Chinese University of Hong Kong\\
  \texttt{bendai@cuhk.edu.hk} \\
}

\begin{document}

\maketitle

\doparttoc %
\faketableofcontents %

\begin{abstract}
  In semantic segmentation, a recent line of RankSEG methods directly optimizes Dice/IoU scores at inference time, improving alignment with evaluation metrics without modifying model training. Despite its theoretical and empirical success, RankSEG relies on the restrictive Conditional Independence Assumption (CIA), which ignores crucial label correlations and therefore degrades performance in ambiguous or low-contrast scenarios. 
  However, accounting for full label dependence is computationally prohibitive, requiring $\mcal{O}(d^3)$ time.
  To address this, we replace the CIA with a Spatially Localized Dependence (SLD) structure that captures local label correlations while keeping the dependence model tractable. 
  We further overcome the remaining computational bottleneck via a Reciprocal Moment Approximation coupled with a novel fixed-point optimization strategy that eliminates exhaustive search. The proposed algorithm achieves a highly practical $\mcal{O}(d \log d)$ complexity and consistently outperforms conventional argmax and CIA-based RankSEG across diverse segmentation benchmarks. Improvements are significant in low-contrast or small-object scenarios, where label dependence offers valuable signals complementary to image information for accurate segmentation.
  The code of experiments is available at \url{https://github.com/ZixunWang/RankSEG-DEP}.
\end{abstract}

\vspace{-1em}
\section{Introduction}
Semantic segmentation is a fundamental computer vision task that partitions an image into regions corresponding to different classes. It has a wide range of applications, including medical imaging~\citep{bilic2023liver}, autonomous driving~\citep{siam2018comparative}, and image editing~\citep{ling2021editgan}.

Most existing approaches generate segmentation mask by applying thresholding or argmax operations to pixel-wise probabilities estimated by neural networks~\citep{ronneberger2015u,chen2017rethinking,zhao2017pyramid,xie2021segformer}. However, they do not directly optimize the evaluation metrics commonly used in segmentation tasks, such as the Dice score or Intersection over Union (IoU), which are set-based metrics that evaluate the overall quality of the predicted mask. 

Although several surrogate loss functions have been proposed in an attempt to optimize these target metrics, such as soft-Dice/IoU loss~\citep{rahman2016optimizing,eelbode2020optimization} and Lov{\'a}sz loss~\citep{berman2018lovasz,yu2018lovasz}, they still suffer from three main limitations: (1) their theoretical consistency~\citep{tewari2007consistency} with the target metrics is either negative (Lov{\'a}sz hinge~\citep{finocchiaro2022structured}) or unknown (soft-Dice/IoU); (2) unlike cross-entropy loss, they do not produce calibrated probabilities, which are crucial for uncertainty quantification in high-stakes settings such as medical imaging~\citep{mehrtash2020confidence}; and (3) likely due to non-convexity, they are typically combined with cross-entropy loss through a weighted sum to stabilize training, introducing additional hyperparameters to tune. 

To address these limitations, a recent series of work~\citep{dai2023rankseg,wang2025rankseg}, known as RankSEG, have developed algorithms that directly optimize the Dice/IoU scores with theoretical guarantees. RankSEG takes calibrated probabilities as input and replaces the thresholding/argmax operation in the original inference step without modifying model training, making it easy to integrate into pre-trained models as a performance-enhancing post-processing module. Despite its theoretical and empirical success, RankSEG relies on the conditional independence assumption (CIA), i.e., $Y_j \indep Y_{j^{\prime}} | \bm{X}$ for $j \neq j^{\prime}$, where $\bm{X}$ is the image and $Y_j$ and $Y_{j^{\prime}}$ denote the labels of pixels $j$ and $j^{\prime}$, respectively. This assumption ignores the correlation between labels of different pixels and thus loses valuable information; for example, if two pixels are similar in both intensity and position, they are very likely to share the same label. Such information is especially useful in challenging tasks where the image itself is too noisy to provide sufficient information for accurate segmentation. 

\begin{figure}
    \vspace{-1em}
    \centering
    \begin{subfigure}[b]{0.7\textwidth}
        \centering
        \includegraphics[width=\textwidth]{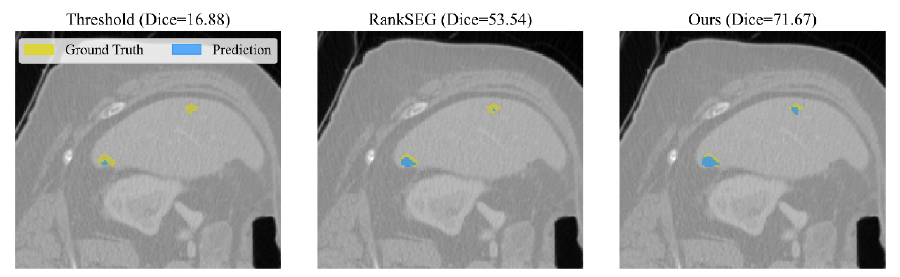}
        \caption{Comparison of different methods in tumor segmentation.}
        \label{fig:pred_compare}
    \end{subfigure}
    \hfill
    \begin{subfigure}[b]{0.28\textwidth}
        \centering
        \includegraphics[width=\textwidth]{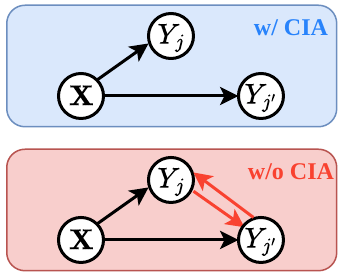}
        \caption{Graphical model.}
        \label{fig:graph}
    \end{subfigure}
    \caption{(a) The intensity contrast, and consequently the foreground probabilities, are uniformly low, so traditional thresholding barely detects any tumor and RankSEG almost entirely misses the tumor in the upper right. In contrast, our proposed method more fully recovers the tumor by leveraging label dependencies within the tumor region. (b) When predicting $Y_{j^{\prime}}$, only information from $\bm{X}$ is used under CIA, whereas both $\bm{X}$ and the neighboring label $Y_j$ are utilized when the CIA is not assumed.}
    \label{fig:compare}
    \vspace{-1.5em}
\end{figure}

\cref{fig:pred_compare} illustrates such an example: the tumor region is hardly distinguishable due to low intensity contrast in the image, and both traditional thresholding and RankSEG fail to recover the tumor in the upper right. In contrast, our proposed method, which leverages the label dependencies within this region, successfully captures the tumor more completely. Using graphical models, \cref{fig:graph} depicts the information flow of methods with and without CIA: when predicting a particular label $Y_{j^{\prime}}$, both methods rely on the image information $\bm{X}$, typically encoded as pixel-wise probabilities produced by deep neural networks. However, the graphical model without CIA explicitly accounts for the influence of other label $Y_j$ on the prediction of $Y_{j^{\prime}}$, and vice versa.

Therefore, a natural question arises: \textbf{\textit{can we develop an efficient method that directly optimizes the Dice/IoU scores yet without relying on CIA?}} This question is challenging primarily because the computational efficiency of RankSEG critically depends on the CIA; without this assumption, the associated optimization in RankSEG incurs a complexity of $\mathcal{O}(d^3)$, where $d$ is the number of pixels in the image. Such complexity is impractical for real-world applications, since $d$ typically reaches hundreds of thousands (e.g., $d \approx 10^5$ for a $384 \times 384$ image).

To answer this question, we relax the CIA to a more realistic Spatially Localized Dependence (SLD) structure, which is strictly weaker than the CIA and well suited to image segmentation tasks. Under SLD, we develop an algorithm within the RankSEG framework that optimizes Dice/IoU in $\mathcal{O}(d \log d)$. Our main contributions are summarized as follows:
\begin{itemize}[leftmargin=1.5em, topsep=1pt, itemsep=1pt, parsep=0pt]
  \item \textbf{Relaxation of the CIA}: We relax the CIA to SLD within the RankSEG framework, preserving valuable dependency information and improving predictions in low-contrast or ambiguous regions where independence assumptions fail.
  \item \textbf{An $\mcal{O}(d \log d)$ algorithm}: Building upon this relaxed assumption, we develop a novel and efficient algorithm grounded in the RankSEG framework that directly optimizes Dice/IoU scores without relying on the CIA, making it practical for real-world applications.
  \item \textbf{Empirical validation}: We conduct extensive experiments on multiple image segmentation datasets, demonstrating the superior performance of our method compared to CIA-based methods (argmax/thresholding and RankSEG), especially in challenging scenarios with low image contrast.
\end{itemize}

\vspace{-1em}
\section{Preliminaries}
\vspace{-1em}
We start with binary segmentation for ease of clarification. Let $\bm{X} \in \R^d, \bm{Y} \in \{0,1\}^d$ represent the random variables for an image and its corresponding segmentation mask, respectively. 
The segmentation function $\bm{\delta}: \R^d \rightarrow \{0,1\}^d$ produces a predicted mask $\bm{\delta}(\bm{x})\in\{0,1\}^d$ for a test image $\bm{x}\in\R^d$. Our objective is to find the segmentation rule $\bm{\delta}$ that maximizes the expected Dice score:
\begin{equation}
    \mbb{E}_{\bm{X}, \bm{Y}}(\Dice(\bm{\delta})) = \mbb{E}_{\bm{X}, \bm{Y}} \Big( \frac{2 \bm{\delta}^{\intercal}(\bm{X}) \bm{Y}}{\|\bm{\delta}(\bm{X})\|_1 + \|\bm{Y}\|_1} \Big). \label{eq:dice}
\end{equation}

We denote $[d] = \{1,\cdots,d\}$ as the index set of pixels, and $p_j(\bm{x}) = \mbb{P}(Y_j = 1 | \bm{x})$ as the conditional probability of pixel $j$ being a foreground pixel given the image $\bm{x}$. In practice, the probability mask $\bm{p}(\bm{x}) \in [0,1]^d$ is estimated by a deep neural network, and our focus is on how to derive the final segmentation mask from $\bm{p}(\bm{x})$. For notational convenience, we henceforth suppress the conditioning on $\bm{X}=\bm{x}$ and write, e.g., $p_j$ in place of $p_j(\bm{x})$.

\subsection{Thresholding/Argmax over probability}

The traditional approach to segmentation applies the argmax operation to the probability mask for each pixel, which, in the binary case, is equivalent to thresholding at 0.5:
\begin{equation}
    \tilde{\delta}_j = \mathbbm{1}(p_j \geq 0.5) \quad \text{for all } j \in [d].
\end{equation}
However, this method has been shown to be suboptimal with respect to Dice/IoU~\citep{dai2023rankseg,wang2025rankseg}. Intuitively, this is because it makes pixel-wise decisions, whereas Dice/IoU assesses the overall overlap between the predicted and ground-truth masks. In contrast, as we show later, RankSEG accounts for this global evaluation criterion by ranking pixels according to their contributions to the expected score.

\subsection{RankSEG}
\label{sec:rankseg}
Although the original RankSEG~\citep{dai2023rankseg} is formulated under the CIA, we present the corresponding \cref{thm:rankseg} without this assumption to better illustrate the associated computational challenges. Notably, this theorem partially coincides with the Bayes rule for optimizing F-measure in multi-label classification with label dependencies \citep{dembczynski2013optimizing}.
\begin{theorem}[\citep{dembczynski2013optimizing,dai2023rankseg}]
    \label{thm:rankseg}
    The optimal segmentation rule $\bm{\delta}^*=\argmax \mbb{E}(\Dice(\bm{\delta}))$ is given by: $\delta^*_j = \mathbbm{1}(j \in \text{Top}_{\tau^*}(\bm{S}_{1:d, \tau^*}))$,
    \begin{align}
      \text{where} \quad & \tau^* = \argmax_{\tau \in [d]} \omega_{\tau}, \quad \omega_{\tau} = \textstyle{\sum_{j \in \text{Top}_{\tau}(\bm{S}_{1:d, \tau})}} S_{j,\tau} \\
      \text{and} \quad & S_{j,\tau} = p_j \mbb{E} (\tau + \Gamma_j)^{-1} = p_j \left(\sum_{k=1}^{d} \frac{\mbb{P}(\Gamma_j=k)}{\tau + k}\right). \label{eq:S_expand}
    \end{align}
    $\Gamma_j$ distributed as $\|\bm{Y}\|_1 \mid (Y_j = 1)$, i.e., $\mbb{P}(\Gamma_j = k) = \mbb{P}(\|\bm{Y}\|_1 = k | Y_j = 1)$ for all $k \in [d]$. 
    
\end{theorem}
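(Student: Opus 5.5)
Since the Dice score depends on $\bm{X}$ only through the pointwise decision, I will first reduce to maximizing the conditional expected Dice $\mbb{E}(\Dice(\bm{\delta})\mid\bm{X}=\bm{x})$ separately for each $\bm{x}$. Fix $\bm{x}$ and a candidate mask $\bm{\delta}\in\{0,1\}^d$. I split the maximization into two stages: first over masks with a prescribed size $\|\bm{\delta}\|_1=\tau$, then over $\tau$. The case $\tau=0$ gives Dice $0$ (with the convention $0/0$ handled as in the paper, or equal to $1$ only on the event $\|\bm{Y}\|_1=0$). I will treat it as a degenerate boundary case and note it separately, since the statement ranges over $\tau\in[d]$.

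For fixed $\tau$, linearity lets me write
\begin{align*}
\mbb{E}\Big(\frac{2\bm{\delta}^\intercal\bm{Y}}{\tau+\|\bm{Y}\|_1}\Big)
=2\sum_{j:\delta_j=1}\mbb{E}\Big(\frac{Y_j}{\tau+\|\bm{Y}\|_1}\Big).
\end{align*}
Conditioning on $Y_j$, the $Y_j=0$ term vanishes, so each summand equals $p_j\,\mbb{E}\big((\tau+\|\bm{Y}\|_1)^{-1}\mid Y_j=1\big)=p_j\,\mbb{E}(\tau+\Gamma_j)^{-1}=S_{j,\tau}$. The identity with the expanded sum in \eqref{eq:S_expand} is immediate, because $\Gamma_j\ge1$ on $\{Y_j=1\}$, so $k$ ranges over $[d]$. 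Thus the objective restricted to size-$\tau$ masks is $2\sum_{j:\delta_j=1}S_{j,\tau}$, a separable linear function. It is maximized by selecting the $\tau$ largest scores, i.e.\ $\text{Top}_\tau(\bm{S}_{1:d,\tau})$, with optimal value $2\omega_\tau$. Ties can be broken arbitrarily without changing the value.

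Finally, maximizing $2\omega_\tau$ over $\tau\in[d]$ gives $\tau^*$, and the corresponding top-$\tau^*$ mask achieves the global conditional optimum. This holds because every mask belongs to exactly one size class and is therefore dominated by the best mask of its size. Integrating over $\bm{X}$ then shows that $\bm{\delta}^*$ maximizes the unconditional expectation, since it is pointwise optimal.

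The main subtlety, rather than an obstacle, is that the scores $S_{j,\tau}$ depend on $\tau$. The ranking can therefore change with $\tau$, which is why the two-stage decomposition (fixing the denominator's $\|\bm{\delta}\|_1$ before ranking) is essential. I would also make sure the conditioning step is written carefully for pixels with $p_j=0$, where $\Gamma_j$ is undefined. In that case $S_{j,\tau}=0$ by convention, and these pixels contribute nothing.
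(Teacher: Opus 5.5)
Your proposal is correct and follows the paper's proof essentially step for step. You fix the volume $\tau=\|\bm{\delta}\|_1$, use linearity and conditioning on $Y_j=1$ to write the expected Dice as $2\sum_j \delta_j S_{j,\tau}$, take the top-$\tau$ scores for the inner problem, and maximize $\omega_\tau$ over $\tau$ for the outer one; your extra bookkeeping (the pointwise-in-$\bm{x}$ reduction, the $p_j=0$ convention, and the $\tau=0$ case, which is harmlessly excluded under the convention $0/0=0$ because the empty mask then scores $0\le\omega_\tau$) only makes explicit what the paper leaves implicit.
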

\textbf{\textit{Intuition.}} $\omega_{\tau}$ denotes the expected Dice score when predicting the best $\tau$ pixels as foreground, and $S_{j,\tau}$ quantifies the contribution of pixel $j$ to this score. $\Gamma_j$ is interpreted as the expected volume conditioned on $Y_j=1$, and $\tau$ as the volume we consider to predict. $\Gamma_j$ and $\tau$ correspond to $\|\bm{Y}\|_1$ and $\|\bm{\delta}(\bm{X})\|_1$ in the denominator of \eqref{eq:dice}, respectively, while $p_j$ corresponds to the numerator; this explains why $S_{j,\tau}$ represents the contribution of pixel $j$ to the expected Dice score.

Under the CIA, \citet{dai2023rankseg} further showed that $S_{j,\tau}$ is monotonically increasing in $p_j$ for any fixed $\tau$, so ranking pixels by $S_{j,\tau}$ reduces to ranking by $p_j$.
This induces a \textit{global ranking} invariant to $\tau$: the optimal segmentation simply selects the $\tau^*$ pixels with the highest probabilities.
Without the CIA, however, $S_{j,\tau}$ depends on the full joint distribution of $\bm{Y}$, and may no longer be monotone in $p_j$. Consequently, the optimal segmentation instead requires explicitly ranking pixels by $S_{j,\tau}$ for each $\tau$.
\subsection{Challenges without CIA and main ideas}

\textbf{\textit{Computational challenges.}} \cref{thm:rankseg} naturally suggests a four-step procedure to obtain the optimal segmentation: (i) compute the contribution scores $\bm{S}$ according to \eqref{eq:S_expand}; (ii) for each candidate volume $\tau \in [d]$, sort the column $\bm{S}_{1:d,\tau} \in [0,1]^d$ and compute the expected Dice score $\omega_{\tau}$ by summing its top $\tau$ entries; (iii) determine the optimal volume $\tau^*$ by maximizing $\omega_{\tau}$; and (iv) predict the top $\tau^*$ pixels in $\bm{S}_{1:d,\tau^*}$ as foreground. This procedure is computationally prohibitive for high-resolution images. The inefficiency stems from three primary bottlenecks:
\begin{enumerate}[label=(\alph*), leftmargin=1.5em, topsep=1pt, itemsep=1pt, parsep=0pt]
  \item \textbf{Modeling full dependence within $\bm{Y}$.} Although label dependence offers valuable predictive signals, modeling arbitrarily complex dependencies renders the distribution of $\Gamma_j$ intractable.
  \item \textbf{Computing $\mbb{E}(\tau + \Gamma_j)^{-1}$.} Even given the distribution of $\Gamma_j$, performing the summation for all $j, \tau \in [d]$ according to \eqref{eq:S_expand} incurs a prohibitive $\mcal{O}(d^3)$ complexity.
  \item \textbf{Re-ranking across varying $\tau$.} Evaluating $\omega_{\tau}$ for each candidate $\tau$ requires a fresh sorting over $\bm{S_{:,\tau}}$, which precludes reusing intermediate results from $\omega_{\tau}$ when computing $\omega_{\tau+1}$.
\end{enumerate}

\begin{wrapfigure}{l}{0.55\textwidth}
    \vspace{-1em}
    \begin{center}
        \includegraphics[width=0.53\textwidth]{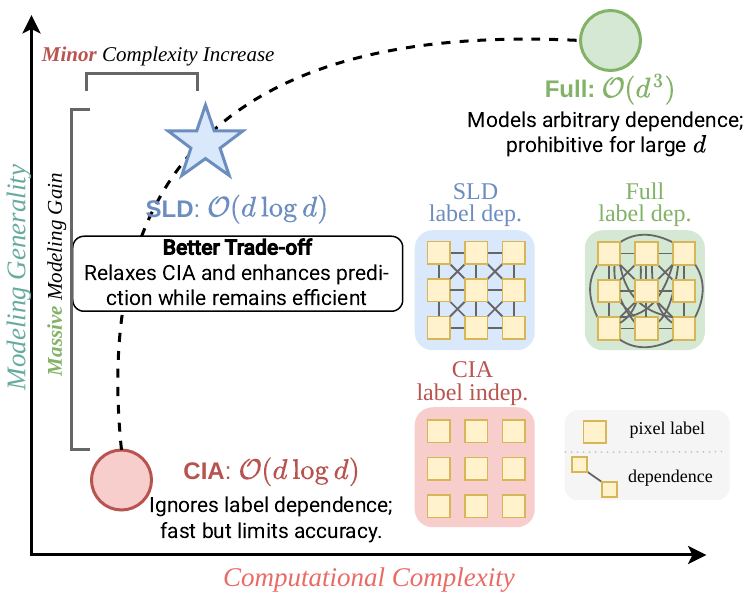}
    \end{center}
    \vspace{-0.5em}
    \caption{\textbf{Trade-off in dependence modeling.} The two endpoints represent the extremes of CIA and Full dependence. By relaxing CIA to local dependence, SLD strikes a better trade-off. The graphs depict the label dependence (dep.) under each model: none under CIA, arbitrary under Full, and neighboring-only under SLD.}
    \label{fig:tradeoff}
    \vspace{-1.7em}
\end{wrapfigure}

\textbf{\textit{Dependence Modeling.}} To overcome these challenges, we begin by reconsidering how to model the joint distribution of $\bm{Y}$. Instead of attempting to model full correlations among all $d$ labels, we propose to model \textbf{Spatially Localized Dependence (SLD)}. The key intuition is that, in image segmentation, spatially neighboring pixels exhibit stronger label correlations, whereas distant pixels behave almost independently. By restricting the dependence to these local neighborhoods, it transforms an intractable global distribution into a structured and tractable form. Crucially, SLD provides a better trade-off between efficiency and generality, as illustrated in \cref{fig:tradeoff}.

\textbf{\textit{Algorithmic Contributions.}} Building on SLD, we introduce two key algorithmic contributions to address the remaining computational bottlenecks. First, we adopt the \textbf{\textit{Reciprocal Moment Approximation}} \citep{wang2025rankseg} that replaces $\mbb{E}(\tau + \Gamma_j)^{-1}$ with $(\tau + \mbb{E}\Gamma_j)^{-1}$, decoupling the expectation from $\tau$ and reducing the problem to merely computing $\mbb{E}\Gamma_j$. SLD then enables to formulate this as a convolution, paired with an $\mcal{O}(d \log d)$ solution via the Fast Fourier Transform. Second, we propose a \textbf{\textit{fixed-point optimization}} strategy to avoid the costly re-ranking for each $\tau$. Instead of sorting from scratch each time, we maintain a single global ranking and alternate between updating $\tau$ based on the current ranking and refining the ranking from the updated $\tau$. This procedure converges quickly to an approximate $\tau^*$ without exhaustive enumeration.

\vspace{-0.5em}
\section{Method}
\label{sec:method}
\vspace{-0.5em}

In this section, we develop an efficient inference algorithm that incorporates SLD into the RankSEG framework. We first introduce the SLD assumption in \cref{sec:sld}, which relaxes the CIA to a local covariance structure tailored to image segmentation. 
These changes pose two challenges: evaluating the conditional reciprocal moment and avoiding repeated sorting across volumes.
We address the first in \cref{sec:rma} using a Reciprocal Moment Approximation, which reduces score computation to the conditional mean $\mbb{E}\Gamma_j$ and exploits the SLD structure to compute all such means efficiently by convolution and FFT. We address the second in \cref{sec:fixed-point} through a fixed-point optimization that iteratively refines the ranking and prediction volume without exhaustive search.

\subsection{Spatially Localized Dependence}
\label{sec:sld}

Completely removing the CIA introduces significant theoretical and methodological challenges due to the vast dimensionality of the joint label distribution. However, in the context of image segmentation, assuming arbitrary dependencies between all pairs of pixels is unnecessarily complex. Visual structures naturally exhibit spatial locality: \textbf{physically proximate pixels are highly likely to share the same semantic label, while the correlation between distant pixels drops off rapidly}. 

We formalize this intuition through the SLD assumption in \cref{asm:local}.

\begin{assumption}[Spatially Localized Dependence; SLD]
  \label{asm:local}
  Denote the covariance matrix of $\bm{Y}$ as $\bm{\Sigma}$, i.e., $\Sigma_{ij} = \text{Cov}(Y_i, Y_j)$. The covariance between any two labels satisfies:
  \begin{align}
    \Sigma_{ij} = \sqrt{\Sigma_{ii} \Sigma_{jj}} \cdot \mcal{K}(r(i,j)) \quad \text{with} \quad \mcal{K}(r) = \exp(-r^2/2 \theta^2), \label{eq:sld}
  \end{align}
  where $\mcal{K}$ is a Gaussian kernel, $\theta$ controls the decay rate, $r(i, j)$ is the distance between $i$ and $j$, and $\Sigma_{ii} = p_i(1-p_i)$ is the variance of $Y_i$.
\end{assumption}

The SLD explicitly encodes a structural prior: the covariance between two labels is the geometric mean of their individual variances, weighted by an exponentially decaying kernel of their spatial distance. Notably, this formulation is \textbf{strictly weaker than} the CIA, as it permits local dependence and therefore enables the exploitation of such useful information for improved performance.

Leveraging local dependence is a well-established principle in image segmentation literature, as exemplified by Conditional Random Fields (CRFs)~\citep{krahenbuhl2011efficient,farabet2012learning,chen2014semantic}. However, these methods target Maximum A Posteriori (MAP) inference, seeking the single most probable joint label map, while do not directly optimize set-based metrics like the Dice score. In contrast, we incorporate spatial priors into RankSEG to directly optimize such metrics. See \cref{sec:crf} for a more detailed discussion.

\subsection{Reciprocal Moment Approximation under SLD}
\label{sec:rma}

The primary bottleneck in evaluating the pixel contribution score $S_{j, \tau}$ lies in the reciprocal moment $\mbb{E}(\tau + \Gamma_j)^{-1}$. Due to its nonlinearity in $\Gamma_j$, exact evaluation takes $\mcal{O}(d)$ terms according to \eqref{eq:S_expand}. In contrast, evaluating the first moment, $\mbb{E} \Gamma_j$, is straightforward, as it reduces to summing the individual means. 
To bridge this gap, we adopt the Reciprocal Moment Approximation (RMA) used in \citet{wang2025rankseg} to replace $\mbb{E}(\tau + \Gamma_j)^{-1}$ with $(\tau + \mbb{E}\Gamma_j)^{-1}$, which yields the following approximation:
\begin{align}
  M_{j,\tau} = \frac{p_j}{\tau + \mbb{E} \Gamma_j} \approx p_j \mbb{E}(\tau + \Gamma_j)^{-1} = S_{j,\tau}. \label{eq:S_approx}
\end{align}
This approximation effectively interchanges the reciprocal and expectation, and can also be viewed as a first-order Taylor approximation to the reciprocal moment. Crucially, it depends only on the first moment $\mu_j := \mbb{E} \Gamma_j$. The following computation indicates that the SLD enables a more accurate estimate of $\mu_j$ by incorporating label correlations through a correction term beyond the CIA:
\begin{equation}
  \mu_j = \mbb{E} \Gamma_j = \sum_{i=1}^d \mathbb{E}\Big( Y_i \mid Y_j = 1 \Big)
  =
  \left\{
  \begin{aligned}
    & \sum_{i=1}^d \mathbb{E}( Y_i ) = \sum_{i=1}^d p_i, \quad & (\text{CIA}) \\
    & \sum_{i=1}^d p_i + \frac{\sqrt{\Sigma_{jj}}}{p_j} \sum_{i=1}^d \sqrt{\Sigma_{ii}} \mcal{K}(r(i,j)).  & (\text{SLD})
  \end{aligned}
  \right.
\end{equation}

Under the CIA, label independence reduces $\mbb{E}(Y_i | Y_j=1)$ to the marginal $\mbb{E}(Y_i)$, so $\mu_j$ collapses to a sum of probabilities that is independent of $j$, discarding critical structural information. The SLD instead leverages the covariance structure, adding a correction term that captures local dependencies. While the SLD expression for $\mu_j$ appears complex, computing all $j \in [d]$ simultaneously reveals a convolutional form. Let $q = \sum_{j=1}^d p_j$ and $\bm{\nu} = \sqrt{\text{diag}(\bm{\Sigma})} \in \mbb{R}^d$, then
\begin{align}
  & \bm{\mu} = q \cdot \bm{1} + \frac{\bm{\nu}}{\bm{p}} \cdot (\bm{\nu} \star \bm{\kappa}), \quad \text{where} \quad (\bm{\nu} \star \bm{\kappa})_j = \sum_{i=1}^{d} \bm{\nu}_i \bm{\kappa}_{d-j+i}=\sum_{i=1}^d \sqrt{\Sigma_{ii}} \mcal{K}(r(i,j)), \\
  & \quad \text{and} \quad \bm{\kappa} = (\mcal{K}(d-1), \mcal{K}(d-2), \cdots, \mcal{K}(0), \mcal{K}(1), \cdots, \mcal{K}(d-1))^{\intercal} \in \mathbb{R}^{2d-1}.
\end{align}
Here we adopt a 1D notation for simplicity; the rigorous 2D form on the image grid is given in \cref{sec:conv2d}. The convolution $(\bm{\nu} \star \bm{\kappa})$ can be efficiently evaluated via FFT in $\mathcal{O}(d \log d)$ time, with the remaining element-wise operations costing $\mathcal{O}(d)$. Moreover, this is a one-time cost: once $\bm{\mu}$ is precomputed, each RMA-based score $M_{j,\tau} = p_j / (\tau + \mu_j)$ is obtained in $\mcal{O}(1)$ cost.

With the RMA reducing the reciprocal moment to the first moment $\bm{\mu}$, which admits efficient computation as shown above, the optimal volume follows as
\begin{align}
  \tau^* = \argmax_{\tau \in [d]} \textstyle{\sum_{j \in \text{Top}_{\tau}(\bm{M}_{1:d, \tau})}} M_{j,\tau}. \label{eq:tau_opt}
\end{align}
However, unlike under CIA, the ranking of $\bm{M}_{1:d,\tau}$ depends on $\tau$, which we resolve via a fixed-point iteration in the next section.

\vspace{-0.5em}
\subsection{Fixed-point Optimization for Solving $\tau^*$}
\label{sec:fixed-point}

While RMA accelerates the evaluation of individual pixel scores, finding the optimal volume $\tau^*$ remains another bottleneck. Solving \eqref{eq:tau_opt} naively requires re-ranking the score vector $\bm{M}_{1:d,\tau}$ for each candidate $\tau$, resulting in $\mcal{O}(d^2 \log d)$ complexity. In contrast, under the CIA-based RankSEG framework~\citep{dai2023rankseg,wang2025rankseg}, the ranking is invariant across $\tau$ and is solely decided by the pixel-wise probabilities $\bm{p}$. Motivated by this observation, we isolate the global ranking from the volume selection by formulating the search for $\tau^*$ as a fixed-point iteration. 

We introduce a coupled operator $T: [d] \rightarrow [d]$ that alternates between two steps: (i) computing the global ranking based on a fixed volume, and (ii) selecting the optimal volume based on that ranking:
\begin{align}
  T(\tau) = \argmax_{\tau^{\prime} \in [d]} \pi_{\tau^{\prime}}\big( \bm{o}(\tau) \big) = \argmax_{\tau^{\prime} \in [d]} \sum_{j=1}^{\tau^{\prime}} M_{o_j(\tau), \tau^{\prime}} \quad \text{with} \quad \bm{o}(\tau) = \text{argsort}(\bm{M}_{1:d, \tau}). \label{eq:T}
\end{align}
The following lemma confirms that every optimal volume $\tau^*$ of \cref{eq:tau_opt} is a fixed point of $T$.
\begin{lemma} \label{lem:fixed}
  Denote $\text{Fix}(T) = \{\tau \in [d]: T(\tau) = \tau\}$ as the set of fixed points of $T$, then $\tau^* \in \text{Fix}(T)$.
\end{lemma}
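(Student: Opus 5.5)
The plan is a sandwich argument: show that $T(\tau^*)$ can do no worse than $\tau^*$, and no better than the optimal value. Write $\Omega(\tau) := \sum_{j \in \text{Top}_{\tau}(\bm{M}_{1:d,\tau})} M_{j,\tau}$ for the objective in \eqref{eq:tau_opt}, so that $\tau^* \in \argmax_{\tau} \Omega(\tau)$.

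\textbf{Step 1: the two objectives coincide at the diagonal and are ordered elsewhere.} Since $\bm{o}(\tau)$ sorts $\bm{M}_{1:d,\tau}$ in decreasing order, the first $\tau$ indices of $\bm{o}(\tau)$ form $\text{Top}_{\tau}(\bm{M}_{1:d,\tau})$. Hence
\begin{equation*}
\pi_{\tau}(\bm{o}(\tau)) = \Omega(\tau).
\end{equation*}
For any ordering $\bm{o}$ and any $\tau'$, the set $\{o_1,\dots,o_{\tau'}\}$ has $\tau'$ elements. The sum of $\bm{M}_{1:d,\tau'}$ over any $\tau'$-subset is at most the sum over its top $\tau'$ entries, so
\begin{equation*}
\pi_{\tau'}(\bm{o}) \le \Omega(\tau').
\end{equation*}

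\textbf{Step 2: combine at $\tau^*$.} Apply both facts with $\bm{o} = \bm{o}(\tau^*)$. For every $\tau'$,
\begin{equation*}
\pi_{\tau'}(\bm{o}(\tau^*)) \le \Omega(\tau') \le \Omega(\tau^*) = \pi_{\tau^*}(\bm{o}(\tau^*)).
\end{equation*}
So $\tau^*$ attains the maximum of $\tau' \mapsto \pi_{\tau'}(\bm{o}(\tau^*))$, which gives $\tau^* \in T(\tau^*)$.

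\textbf{Main obstacle: well-definedness of the $\argmax$.} The equality $T(\tau^*) = \tau^*$ needs a tie convention, since both the $\argmax$ defining $T$ and the argsort may be non-unique. I would resolve this in one of two ways. The first option is to read $T$ as set-valued, so that Step 2 gives exactly $\tau^* \in T(\tau^*)$. The second option is to fix a consistent tie-breaking rule, say the smallest maximizer, in both \eqref{eq:tau_opt} and \eqref{eq:T}, and take $\tau^*$ to be that same maximizer of $\Omega$.

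Under the second option, suppose some $\tau' < \tau^*$ also maximized $\pi_{\cdot}(\bm{o}(\tau^*))$. The chain in Step 2 would then force $\Omega(\tau') = \Omega(\tau^*)$, contradicting the minimality of $\tau^*$. Ties inside the argsort are harmless, because any choice still places a valid top-$\tau^*$ set in the first $\tau^*$ positions, so Step 1 holds regardless. This gives $T(\tau^*) = \tau^*$, i.e.\ $\tau^* \in \text{Fix}(T)$.
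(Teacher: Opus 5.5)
Your proof is correct and follows essentially the same sandwich argument as the paper. The paper writes the objective as a maximum over permutations and bounds $\pi_{\tau'}(\bm{o}(\tau^*))$ by it, exactly as your chain $\pi_{\tau'}(\bm{o}(\tau^*)) \le \Omega(\tau') \le \Omega(\tau^*) = \pi_{\tau^*}(\bm{o}(\tau^*))$ does; your explicit handling of ties (a set-valued $T$, or a common smallest-maximizer rule) is a tightening, since the paper asserts $T(\tau^*)=\tau^*$ without addressing non-unique maximizers.
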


This leads to a natural fixed-point iteration for finding $\tau^*$,
$$
\tau^{(t+1)} = T(\tau^{(t)}) = \argmax_{\tau^{\prime} \in [d]} \pi_{\tau^{\prime}}\big( \bm{o}(\tau^{(t)}) \big),
$$
where $\bm{o}( \tau^{(0)} ) := \text{argsort}(\bm{p})$ is the initial global ranking induced by the pixel-wise probabilities. The procedure terminates when the ranking stabilizes.

\begin{lemma}[Monotone Ascent] \label{lem:ascent}
  Let $\{\tau^{(t)}\}_{t \geq 0}$ be the sequence generated by 
  $\tau^{(t+1)} = T(\tau^{(t)})$, and define 
  $\Phi(\tau) = \pi_{\tau}(\bm{o}(\tau))$. Then:
  \begin{lemlist}
    \item \label{lem:ascent:i} $\Phi(\tau^{(t+1)}) \geq \Phi(\tau^{(t)})$ for all $t \geq 0$.
    \item \label{lem:ascent:ii} If the maximizer in $T$ is unique for every 
          $\tau \in [d]$, the iterates converge to a 
          fixed point of $T$.
  \end{lemlist}
\end{lemma}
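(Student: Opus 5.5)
Both parts rest on one elementary fact. For a fixed volume $\tau'$, the sum $\pi_{\tau'}(\bm{o})=\sum_{j=1}^{\tau'} M_{o_j,\tau'}$ is maximized over all orderings $\bm{o}$ by the ordering that sorts $\bm{M}_{1:d,\tau'}$ in decreasing order. The reason is that the top $\tau'$ entries of a vector have the largest possible sum among all $\tau'$-subsets. Therefore $\pi_{\tau'}(\bm{o}) \le \pi_{\tau'}(\bm{o}(\tau')) = \Phi(\tau')$ for every ordering $\bm{o}$, with equality when $\bm{o}=\bm{o}(\tau')$. We read $\text{argsort}$ as descending, which is the convention consistent with \eqref{eq:T}.

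For part (a), set $\tau=\tau^{(t)}$ and $\tau^{(t+1)}=T(\tau)$, and chain three inequalities:
\[
\Phi(\tau^{(t)}) = \pi_{\tau^{(t)}}\big(\bm{o}(\tau^{(t)})\big) \le \max_{\tau'}\pi_{\tau'}\big(\bm{o}(\tau^{(t)})\big) = \pi_{\tau^{(t+1)}}\big(\bm{o}(\tau^{(t)})\big) \le \pi_{\tau^{(t+1)}}\big(\bm{o}(\tau^{(t+1)})\big) = \Phi(\tau^{(t+1)}).
\]
The first inequality holds because $\tau^{(t)}$ is one candidate in the maximization. The middle equality is the definition of $T$. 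The last inequality is the elementary fact applied with $\tau'=\tau^{(t+1)}$.

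For part (b), $\Phi$ takes only finitely many values because $\tau\in[d]$. By (a) the sequence $\Phi(\tau^{(t)})$ is nondecreasing, so it is eventually constant: there is $t_0$ with $\Phi(\tau^{(t+1)})=\Phi(\tau^{(t)})$ for all $t\ge t_0$. At such a step every inequality in the chain is an equality. In particular
\[
\pi_{\tau^{(t)}}\big(\bm{o}(\tau^{(t)})\big)=\max_{\tau'}\pi_{\tau'}\big(\bm{o}(\tau^{(t)})\big),
\]
so $\tau^{(t)}$ attains the maximum defining $T(\tau^{(t)})$. Uniqueness of the maximizer then forces $T(\tau^{(t)})=\tau^{(t)}$. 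Hence $\tau^{(t_0)}$ is a fixed point, and since $T$ fixes it, the iterates stay there from that point on.

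The main subtlety is making ties in the sort well-defined. We fix a deterministic tie-breaking rule so that $\bm{o}(\tau)$ is a function of $\tau$; the elementary fact holds under any tie-breaking, since tied entries give equal sums. The same argument shows Lemma~\ref{lem:fixed}: if $\tau^*$ maximizes $\Phi$, the chain gives $\Phi(\tau^*)\le\Phi(T(\tau^*))\le\Phi(\tau^*)$, so $\tau^*$ attains the maximum in $T(\tau^*)$. It is therefore a fixed point whenever the argmax selection picks it, which is automatic under uniqueness.
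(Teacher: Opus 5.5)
Your proof is correct and follows the paper's argument. Part (a) is the same two-step chain $\Phi(T(\tau)) \ge \pi_{T(\tau)}(\bm{o}(\tau)) \ge \pi_{\tau}(\bm{o}(\tau)) = \Phi(\tau)$; your part (b) says that eventual constancy of $\Phi(\tau^{(t)})$ forces equality in the chain and hence $T(\tau^{(t)})=\tau^{(t)}$ by uniqueness, which is the contrapositive of the paper's argument that a non-fixed iterate gives a strict increase of $\Phi$, impossible indefinitely since $\Phi$ takes finitely many values (the paper also reads off an explicit bound of at most $d$ iterations).
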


The uniqueness condition in \cref{lem:ascent:ii} is exceedingly mild: non-uniqueness requires $\tau^{\prime} \mapsto \pi_{\tau^{\prime}}(\bm{o}(\tau))$ to attain identical values at two or more distinct integers, a degeneracy that defines an algebraic variety of measure zero in $(\bm{p}, \bm{\mu})$ space. In particular, when the probabilities $\bm{p}$ are produced by a neural network with continuous-valued outputs, the condition holds almost surely.

\begin{lemma}[One-Step Convergence] \label{lem:global_ranking}
  If the label distribution satisfies the following condition:
  \begin{align*}
    p_j \ge p_{j^{\prime}} \implies \mu_j \le \mu_{j^{\prime}} \quad \text{for all } j \neq j^{\prime},
  \end{align*}
  then $\bm{M}_{1:d, \tau}$ shares the same ranking as $\bm{p}$ for all $\tau$, and the iteration converges to $\tau^*$ in one step.
\end{lemma}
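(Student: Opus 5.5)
The plan has two parts: first show that a single ordering of pixels ranks every column of $\bm{M}$, and then use this to show that one application of $T$ from $\tau^{(0)}$ already returns $\tau^*$.

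\textbf{Step 1: one ranking for every column.} Fix any $\tau \in [d]$ and a pair $j \neq j'$ with $p_j \ge p_{j'}$. The hypothesis gives $\mu_j \le \mu_{j'}$. Every $\mu_i \ge 0$, since it is a conditional expectation of a nonnegative count, and $\tau \ge 1$. Hence
\[
0 < \tau + \mu_j \le \tau + \mu_{j'}.
\]
Comparing numerators and denominators separately then gives
\[
M_{j,\tau} = \frac{p_j}{\tau+\mu_j} \ge \frac{p_{j'}}{\tau+\mu_j} \ge \frac{p_{j'}}{\tau+\mu_{j'}} = M_{j',\tau}.
\]
So the order $\bm{o}(\tau^{(0)}) = \text{argsort}(\bm{p})$ is a valid descending order for every column $\bm{M}_{1:d,\tau}$. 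Consequently $\text{Top}_{\tau}(\bm{M}_{1:d,\tau})$ can be taken to be the first $\tau$ indices of $\text{argsort}(\bm{p})$.

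\textbf{Handling ties.} One point needs care. When $p_j = p_{j'}$, the hypothesis applied in both directions forces $\mu_j = \mu_{j'}$, so $M_{j,\tau} = M_{j',\tau}$. Any tie-breaking rule therefore gives the same top-$\tau$ sum. In other words, the value $\sum_{j \in \text{Top}_\tau} M_{j,\tau}$ does not depend on which sorting is chosen. I would state this explicitly so that the argsort in \eqref{eq:T} is well defined up to ties that do not affect the objective.

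\textbf{Step 2: one step of $T$ reaches $\tau^*$.} Write $\bm{o} = \text{argsort}(\bm{p})$. By Step 1, for every $\tau'$ we have $\bm{o}(\tau') = \bm{o}$, at least up to the harmless ties. Therefore
\[
\pi_{\tau'}(\bm{o}) = \sum_{j=1}^{\tau'} M_{o_j,\tau'} = \sum_{j \in \text{Top}_{\tau'}(\bm{M}_{1:d,\tau'})} M_{j,\tau'},
\]
which is exactly the objective in \eqref{eq:tau_opt}. It follows that
\[
\tau^{(1)} = T(\tau^{(0)}) = \argmax_{\tau'} \pi_{\tau'}(\bm{o}) = \tau^*.
\]

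\textbf{Step 3: the iteration stops.} Because $\bm{o}(\tau^{(1)}) = \bm{o}$, the ranking does not change after the first step, so the stopping rule fires immediately. In addition, $T(\tau^*) = \tau^*$, which is consistent with \cref{lem:fixed}.

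The only delicate point in the whole argument is this tie and argsort bookkeeping. The inequality chain itself is elementary.
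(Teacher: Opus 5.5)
Your proof is correct and follows the same route as the paper. The hypothesis makes $M_{j,\tau}=p_j/(\tau+\mu_j)$ order-preserving with respect to $\bm p$ for every $\tau$, so $\bm o(\tau)=\text{argsort}(\bm p)$ for all $\tau$, and the first application of $T$ already maximizes the objective in \eqref{eq:tau_opt}. You also make explicit three points the paper's short proof leaves implicit: the denominators $\tau+\mu_j$ are positive, $p_j=p_{j'}$ forces $\mu_j=\mu_{j'}$ so ties in $\bm p$ are harmless, and $\pi_{\tau'}(\text{argsort}(\bm p))$ equals the top-$\tau'$ sum in \eqref{eq:tau_opt}.
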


Surprisingly, the iteration typically converges in a single step in practice. \cref{lem:global_ranking} offers a partial explanation by establishing a sufficient condition under which the initial ranking induced by $\bm{p}$ already matches with the true global ranking. More broadly, we conjecture that the ranking induced by $\bm{p}$ remains close to that of $\bm{M}_{1:d,\,\tau^*}$ even when the condition in \cref{lem:global_ranking} does not hold exactly. The empirical prevalence of single-step convergence suggests that typical label distributions in segmentation yield a well-behaved landscape for $T$.

With the approximate cumulative sums pre-computed as described in \cref{sec:cumsum}, each iteration of \eqref{eq:T} costs $\mcal{O}(d\log d)$, dominated by the sorting operation. Given the one-step convergence observed in practice, we conclude that the overall complexity remains $\mcal{O}(d\log d)$.

\section{Error Analysis of RMA}

\cref{thm:rma} provides bounds on the reciprocal moment in terms of the first and second moments.

\begin{theorem}[Reciprocal Moment Approximation~\citep{wooff1985bounds}] \label{thm:rma}
  Let $\Gamma$ be a sum of dependent Bernoulli random variables with mean $\mu$ and variance $\sigma^2$. For any $\tau > 0$, it satisfies:
  \begin{align}
    (\mu + \tau)^{-1} \le \mbb{E}(\tau + \Gamma)^{-1} \le \frac{\mu + \sigma^2/\tau}{\mu(\mu+\tau) + \sigma^2}.
  \end{align}
  The approximation error, defined as the difference between the upper and lower bounds, satisfies:
  \begin{align}
    \mcal{E} \le \frac{\sigma^2}{\tau[\mu(\mu+\tau) + \sigma^2]}.
  \end{align}
\end{theorem}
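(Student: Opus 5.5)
Throughout, write $f(x) = (\tau + x)^{-1}$ for $x \ge 0$. The plan is to treat the two bounds separately and then subtract. The lower bound will come from convexity of $f$. The upper bound will come from replacing $f$ by a quadratic (or linear-fractional) majorant on the support of $\Gamma$, which is what yields a bound depending only on $\mu$ and $\sigma^2$. The fact that $\Gamma$ is a sum of dependent Bernoullis matters only through two properties: $\Gamma \ge 0$, and $\Gamma$ has finite first and second moments. No independence is used anywhere.

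\textbf{Lower bound.} Since $f$ is convex on $[0,\infty)$, Jensen's inequality gives
\[
\mbb{E} f(\Gamma) \ge f(\mbb{E}\Gamma) = (\mu+\tau)^{-1}.
\]

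\textbf{Upper bound.} I would use the standard trick of choosing the quadratic $Q(x) = a + bx + cx^2$ that agrees with $f$ at $x = 0$ and is tangent to $f$ at a point $x = s > 0$, where $s$ is to be optimized.
\begin{itemize}
\item The difference $f(x) - Q(x)$ has roots at $0$ and a double root at $s$. After multiplying by $(\tau + x) > 0$, it becomes a cubic with leading coefficient of sign $-c$.
\item With $c > 0$, this forces $f \le Q$ on all of $[0,\infty)$.
\item Taking expectations gives $\mbb{E} f(\Gamma) \le a + b\mu + c(\sigma^2 + \mu^2)$.
\end{itemize}
An equivalent and cleaner route is a Cauchy--Schwarz (Gauss-type) argument. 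I would write the intended extremal distribution as the two-point law on $\{0, s\}$ with $s = \mu + \sigma^2/\mu$ and $P(\Gamma = s) = \mu^2/(\mu^2 + \sigma^2)$; this law matches the first two moments. Evaluating $\mbb{E} f$ on it gives
\[
\frac{\sigma^2}{\mu^2+\sigma^2}\cdot\frac{1}{\tau} + \frac{\mu^2}{\mu^2+\sigma^2}\cdot\frac{1}{\tau+s}.
\]
Simplifying (using $\tau + s = (\mu\tau + \mu^2 + \sigma^2)/\mu$) should produce exactly
\[
\frac{\mu+\sigma^2/\tau}{\mu(\mu+\tau)+\sigma^2}.
\]
So the key step is to verify that the tangent-quadratic construction with tangency at this $s$ gives that value. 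This algebraic verification is the main obstacle. I would simply expand and check, noting that in the result of \citep{wooff1985bounds} this is precisely the extremal two-point argument.

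\textbf{Error bound.} Subtracting the lower bound from the upper bound and putting everything over the common denominator $\tau(\mu+\tau)\,[\mu(\mu+\tau)+\sigma^2]$ gives the numerator
\[
(\mu\tau+\sigma^2)(\mu+\tau) - \tau\bigl[\mu(\mu+\tau)+\sigma^2\bigr] = \sigma^2\mu .
\]
Hence
\[
\mcal{E} = \frac{\sigma^2\mu}{\tau(\mu+\tau)\,[\mu(\mu+\tau)+\sigma^2]} \le \frac{\sigma^2}{\tau\,[\mu(\mu+\tau)+\sigma^2]},
\]
where the last step uses $\mu/(\mu+\tau) \le 1$.
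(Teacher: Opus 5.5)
Your proof is correct. It differs from the paper mainly in that the paper gives no argument of its own: it states that \cref{thm:rma} is a direct application of \citet[Theorem 1]{wooff1985bounds} and defers the proof there. Citing buys brevity. Your self-contained route (Jensen for the lower bound; for the upper bound, a quadratic majorant of $f$ on $[0,\infty)$ that meets $f$ at $0$ and is tangent at $s=\mu+\sigma^2/\mu$) buys transparency. It shows that only $\Gamma\ge 0$, $\mu>0$ (automatic here since $\Gamma_j\ge 1$) and a finite second moment are used. That is exactly what justifies the paper's claim that the independent-Bernoulli result of \citet{wang2025rankseg} extends to dependent sums. Your exact gap $\sigma^2\mu/\{\tau(\mu+\tau)[\mu(\mu+\tau)+\sigma^2]\}$ also shows the stated bound on $\mcal{E}$ is loose by the factor $\mu/(\mu+\tau)$.

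The two steps you left open close at once. First, you need not assume $c>0$: evaluating $1-(\tau+x)Q(x)=-c\,x(x-s)^2$ at $x=-\tau$ gives $c=1/[\tau(\tau+s)^2]$. Second, the ``main obstacle'' disappears without expanding anything. Because $Q$ is quadratic, $\mbb{E}Q(\Gamma)$ depends only on $(\mu,\sigma^2)$, so it equals $\mbb{E}_\nu Q$ for your moment-matched two-point law $\nu$ on $\{0,s\}$. Since $Q=f$ on $\{0,s\}$, $\mbb{E}Q(\Gamma)=\mbb{E}_\nu f$, which you already simplified correctly to $(\mu+\sigma^2/\tau)/[\mu(\mu+\tau)+\sigma^2]$. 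Note that the two-point evaluation by itself proves nothing; the majorant is what does the work.

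The Cauchy--Schwarz route you allude to is genuinely a one-liner. Combine $\mu^2\le \mbb{E}[\Gamma/(\tau+\Gamma)]\,\mbb{E}[\Gamma(\tau+\Gamma)]$ with $\mbb{E}(\tau+\Gamma)^{-1}=\tau^{-1}\bigl(1-\mbb{E}[\Gamma/(\tau+\Gamma)]\bigr)$, and the same upper bound follows directly.
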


This theorem generalizes the result of \citet{wang2025rankseg}, which applies only to sums of \textit{independent} Bernoulli random variables, to the setting where CIA does not hold. Moreover, under SLD, the covariance between labels decays exponentially, so the variance of the sum does not fluctuate significantly, growing only at the rate $\mcal{O}(d)$. Consequently, as shown in \cref{thm:error}, the error decays at the rate $\mcal{O}(d^{-1})$, coinciding with that of \citet{wang2025rankseg}, which is desirable for image segmentation tasks where $d$ is typically large. The additional regularity assumptions are mild and naturally satisfied in practice: the condition $\mu_j /d = \Theta(1)$ simply requires that the target object occupies a non-vanishing fraction of the image; while the strict positivity of the softmax function followed from the neural network inherently ensures $p_j$ is bounded away from zero.

\begin{theorem}[RMA error under SLD] \label{thm:error}
  For $\Gamma_j = \sum_{i=1}^d Y_i \mid Y_j = 1$, denote $\sigma_j^2 = \text{Var}(\Gamma_j)$. Then: 
  \begin{thmlist}
    \item Under \cref{asm:local} and $p_j \ge c > 0$ for some constant, $\sigma_j^2 = \mcal{O}(d)$ for all $j \in [d]$;
    \item Further assuming $\mu_j/d = \Theta(1)$, then $| M_{j,\tau} - S_{j,\tau} | = \mcal{O}(d^{-1})$ for all $j,\tau \in [d]$.
  \end{thmlist}
\end{theorem}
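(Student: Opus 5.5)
The plan is to prove part (a) without characterizing the conditional law of $\bm{Y}$ given $Y_j=1$, which SLD does not specify. Instead, I will bound the conditional variance by an unconditional second moment. Write $N=\|\bm{Y}\|_1$ and $q=\mbb{E}N=\sum_i p_i$. The variance is the minimal mean squared deviation over all centering constants, so centering at $q$ gives
\begin{align*}
\sigma_j^2=\text{Var}(N\mid Y_j=1)\le \mbb{E}\big((N-q)^2\mid Y_j=1\big)=\frac{\mbb{E}\big((N-q)^2\mathbbm{1}(Y_j=1)\big)}{p_j}\le \frac{\text{Var}(N)}{p_j}\le \frac{\text{Var}(N)}{c}.
\end{align*}
This reduces (a) to showing $\text{Var}(N)=\mcal{O}(d)$, where only the covariance structure in \cref{asm:local} is needed.

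Next I expand $\text{Var}(N)=\sum_{i,k}\Sigma_{ik}=\sum_{i,k}\sqrt{\Sigma_{ii}\Sigma_{kk}}\,\mcal{K}(r(i,k))$. Using $\Sigma_{ii}=p_i(1-p_i)\le 1/4$, this is at most $\frac14\sum_i\sum_k \mcal{K}(r(i,k))\le \frac d4\sup_i\sum_k \mcal{K}(r(i,k))$.

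The main obstacle is showing that the inner kernel sum is bounded uniformly in $d$ for fixed $\theta$. On the 2D pixel grid, $r(i,k)$ is the Euclidean distance between lattice points, so the kernel factorizes:
\begin{align*}
\sum_k \mcal{K}(r(i,k))\le \Big(\sum_{m\in\mathbb{Z}} e^{-m^2/2\theta^2}\Big)^2\le \big(1+\sqrt{2\pi}\,\theta\big)^2.
\end{align*}
The second inequality compares the one-dimensional sum with a Gaussian integral, and the bound does not depend on $d$ or on the image shape. In the 1D notation of \cref{sec:rma}, the same argument works with the square removed. This gives $\text{Var}(N)\le \frac d4(1+\sqrt{2\pi}\theta)^2$, hence $\sigma_j^2=\mcal{O}(d)$ uniformly in $j$, which proves (a).

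For (b), I apply \cref{thm:rma} to $\Gamma_j$, which is a sum of dependent Bernoulli variables under the conditional law, with mean $\mu_j$ and variance $\sigma_j^2$. The lower bound in \cref{thm:rma} is exactly $(\tau+\mu_j)^{-1}$, and $\mbb{E}(\tau+\Gamma_j)^{-1}$ lies between the two bounds. Therefore
\begin{align*}
0\le S_{j,\tau}-M_{j,\tau}=p_j\Big(\mbb{E}(\tau+\Gamma_j)^{-1}-(\tau+\mu_j)^{-1}\Big)\le p_j\,\mcal{E}\le \frac{\sigma_j^2}{\tau\,[\mu_j(\mu_j+\tau)+\sigma_j^2]}\le \frac{\sigma_j^2}{\tau\mu_j^2},
\end{align*}
where I used $p_j\le 1$. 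With $\tau\ge 1$, $\sigma_j^2=\mcal{O}(d)$ from (a), and $\mu_j\ge c'd$ from the assumption $\mu_j/d=\Theta(1)$, the right-hand side is $\mcal{O}(d/d^2)=\mcal{O}(d^{-1})$. The constants depend only on $c$, $\theta$ and $c'$, so the bound is uniform in $j$ and $\tau$.
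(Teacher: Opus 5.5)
Your proposal is correct and follows the paper's proof step for step. Both arguments bound $\text{Var}(\|\bm{Y}\|_1)=\mcal{O}(d)$ using $\Sigma_{ii}\le 1/4$ and the summability of the separable Gaussian kernel over $\mathbb{Z}^2$, transfer this to $\sigma_j^2\le \text{Var}(\|\bm{Y}\|_1)/p_j$, and plug the result into the bound of \cref{thm:rma}. The only cosmetic differences are that you get the transfer inequality by centering at $q$ (variance as the minimal mean squared deviation), where the paper drops the between-group term in the law of total variance, and that you give the explicit constant $(1+\sqrt{2\pi}\,\theta)^2$, where the paper simply calls it $C_\theta^2$.
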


In words, the SLD structure prevents accumulated variance from growing faster than $\mcal{O}(d)$. Consequently, when the target occupies a non-vanishing fraction of the image and pixel probabilities are nondegenerate, the RMA score approximation becomes increasingly accurate as $d$ grows.

\vspace{-0.5em}
\section{Experiments}
\vspace{-0.3em}

\subsection{Settings}

\textbf{Datasets.} We evaluate our method on five publicly available datasets spanning diverse applications: LiTS~\citep{bilic2023liver} and KiTS~\citep{heller2021state} for medical imaging, DeepGlobe Land~\citep{demir2018deepglobe} for remote sensing, and ADE20K~\citep{zhou2017scene} and Cityscapes~\citep{cordts2016cityscapes} for natural image segmentation.

\textbf{Networks.} Since our method is model-agnostic and applies as a post-processing step on the predicted probability map, we evaluate it across diverse segmentation networks, including UNet~\citep{ronneberger2015u}, DeepLabV3+~\citep{chen2018encoder}, PSPNet~\citep{zhao2017pyramid}, UPerNet~\citep{xiao2018unified}, and SegFormer~\citep{xie2021segformer}. The first four are CNN-based models, while SegFormer is a transformer-based model. See \cref{sec:training} for more training details.

\textbf{Compared Methods.} To demonstrate the benefits of relaxing CIA to SLD and to evaluate our method as a training-free post-processing module, we compare against approaches using the same predicted probability map without modifying training: (i) \textit{Argmax-prob}, which assigns each pixel the highest-probability category. This is the most widely used inference rule prior to the RankSEG; (ii) \textit{CIA-RankSEG}~\citep{wang2025rankseg}, which optimizes Dice/IoU under the CIA assumption. We also evaluated \textit{CRF}~\citep{krahenbuhl2011efficient}, but since it consistently underperformed Argmax-prob, we defer its results to \cref{sec:crf}.

\textbf{Evaluation Metrics.} We report the image-wise mDice and mIoU for all datasets, following the computation of mDice$^\text{I}$ and mIoU$^\text{I}$ defined in \citet{wang2023revisiting}.

\textbf{Algorithmic Details.} We uniformly set $\theta=300$ for the Gaussian kernel in SLD across all datasets and networks. For multi-class segmentation, we adopt the same strategy as \citet{wang2025rankseg}: first apply binary RankSEG to each class independently, and then resolve conflicts by selecting the class with the highest score increment; See details in \cref{sec:multi-class}.

\subsection{Main Results}
We evaluate our method against the traditional Argmax-prob baseline and state-of-the-art CIA-RankSEG. As summarized in \cref{tab:main_results_1,tab:main_results_2}, \textbf{our method yields consistent improvements by relaxing the CIA to SLD and capturing label dependence}. We highlight the following observations:

\textbf{Model- and Dataset-Agnostic Improvements.} Gains are robust across both architectures and data domains. They hold uniformly across CNN- and transformer-based backbones, showing insensitivity to the model design, and transfer seamlessly from medical to natural segmentation. This dual robustness confirms that our method serves as a versatile post-processing module.

\textbf{Pronounced Gains on Low-Contrast Datasets.} The improvements are most pronounced on low-contrast medical images with ambiguous boundaries (LiTS, KiTS), where label dependence provides a crucial signal for accurate segmentation. For example, on LiTS and KiTS with DeepLabV3+, our method gains $+2.49$ and $+2.91$ Dice over Argmax-prob, which are $+0.37$ and $+0.51$ over CIA-RankSEG, respectively. This corroborates our motivation that capturing label dependence is especially valuable when predictions are inherently uncertain.

\begin{table}[t]
\vspace{-1em}
  \centering
  \caption{Comparison of performance (Dice and IoU \%) in LiTS and KiTS.}
  \label{tab:main_results_1}
  \small                                    %
  \renewcommand{\arraystretch}{0.9}         %
    \setlength\extrarowheight{0pt}          %
    \begin{tabular}{c|c|c|c|c|c}
        \toprule
        \multirow{2}{*}{Model} & \multirow{2}{*}{Prediction} & \multicolumn{2}{c|}{KiTS} & \multicolumn{2}{c}{LiTS} \\
        \cline{3-4} \cline{5-6}
        & & IoU & Dice & IoU & Dice \\
        \midrule
        \multirow{3}{*}{UNet} & Argmax-prob & 51.00 & 57.36 & 38.45 & 47.58 \\
        & CIA-RankSEG & 53.54 & 60.07 & 40.70 & 50.07 \\
        & \cellcolor{highlight}\textbf{Ours} & \cellcolor{highlight}\textbf{53.92} & \cellcolor{highlight}\textbf{60.48} & \cellcolor{highlight}\textbf{40.77} & \cellcolor{highlight}\textbf{50.16} \\
        \hline
        \multirow{3}{*}{DeepLabV3+} & Argmax-prob & 54.19 & 61.16 & 38.34 & 47.38 \\
        & CIA-RankSEG & 56.22 & 63.56 & 40.09 & 49.50 \\
        & \cellcolor{highlight}\textbf{Ours} & \cellcolor{highlight}\textbf{56.70} & \cellcolor{highlight}\textbf{64.07} & \cellcolor{highlight}\textbf{40.49} & \cellcolor{highlight}\textbf{49.88} \\
        \bottomrule
    \end{tabular}
\vspace{-0.5em}
\end{table}
\begin{table}[t]
\vspace{-0.8em}
  \centering
  \caption{Comparison of performance (mDice and mIoU \%) in ADE20K, Cityscapes, and DeepGlobe.}
  \label{tab:main_results_2}
  \small
  \renewcommand{\arraystretch}{0.9}
    \setlength\extrarowheight{0pt}
    \begin{tabular}{c|c|c|c|c|c|c|c}
        \toprule
        \multirow{2}{*}{Model} & \multirow{2}{*}{Prediction} & \multicolumn{2}{c|}{ADE20K} & \multicolumn{2}{c|}{Cityscapes} & \multicolumn{2}{c}{DeepGlobe} \\
        \cline{3-8}
        & & mIoU & mDice & mIoU & mDice & mIoU & mDice \\
        \midrule
        \multirow{3}{*}{PSPNet} & Argmax-prob & 51.32 & 58.66 & 73.07 & 80.45 & 57.24 & 66.23 \\
        & CIA-RankSEG & 51.57 & 59.17 & 73.72 & 81.14 & 58.20 & 67.82 \\
        & \cellcolor{highlight}\textbf{Ours} & \cellcolor{highlight}\textbf{51.92} & \cellcolor{highlight}\textbf{59.49} & \cellcolor{highlight}\textbf{73.84} & \cellcolor{highlight}\textbf{81.27} & \cellcolor{highlight}\textbf{58.47} & \cellcolor{highlight}\textbf{68.14} \\
        \hline
        \multirow{3}{*}{DeepLabV3+} & Argmax-prob & 52.53 & 59.57 & 73.37 & 80.59 & 57.75 & 66.73 \\
        & CIA-RankSEG & 52.64 & 59.95 & 73.92 & 81.24 & 58.38 & 67.94 \\
        & \cellcolor{highlight}\textbf{Ours} & \cellcolor{highlight}\textbf{53.02} & \cellcolor{highlight}\textbf{60.27} & \cellcolor{highlight}\textbf{74.08} & \cellcolor{highlight}\textbf{81.42} & \cellcolor{highlight}\textbf{58.69} & \cellcolor{highlight}\textbf{68.27} \\
        \hline
        \multirow{3}{*}{SegFormer} & Argmax-prob & 54.09 & 61.03 & 73.32 & 80.53 & 59.87 & 68.74 \\
        & CIA-RankSEG & 54.72 & 61.92 & 74.10 & 81.38 & 60.87 & 70.14 \\
        & \cellcolor{highlight}\textbf{Ours} & \cellcolor{highlight}\textbf{55.07} & \cellcolor{highlight}\textbf{62.23} & \cellcolor{highlight}\textbf{74.23} & \cellcolor{highlight}\textbf{81.51} & \cellcolor{highlight}\textbf{61.12} & \cellcolor{highlight}\textbf{70.42} \\
        \hline
        \multirow{3}{*}{UPerNet} & Argmax-prob & 56.94 & 63.98 & 75.66 & 82.61 & 59.49 & 68.60 \\
        & CIA-RankSEG & 57.67 & 64.92 & 76.17 & 83.21 & 60.19 & 69.57 \\
        & \cellcolor{highlight}\textbf{Ours} & \cellcolor{highlight}\textbf{57.94} & \cellcolor{highlight}\textbf{65.19} & \cellcolor{highlight}\textbf{76.25} & \cellcolor{highlight}\textbf{83.31} & \cellcolor{highlight}\textbf{60.53} & \cellcolor{highlight}\textbf{69.91} \\
        \bottomrule
    \end{tabular}
  \vspace{-1em}
\end{table}

\vspace{-0.5em}
\subsection{Fined-grained Improvements}

To investigate the source of our improvements, we conduct a fine-grained analysis along two dimensions: (i) \emph{easy vs. hard cases}, where difficulty is measured by Argmax-prob performance; and (ii) \emph{small vs. large categories}, where difficulty is measured by the average pixel area of each category. In both analyses, we report relative improvements over baselines to account for varying difficulty levels.

\begin{wraptable}{r}{0.45\textwidth}
\vspace{-1em}
\centering
\caption{Relative Dice improvements over baselines on KiTS subsets, formed by the lower quantiles of Argmax-prob performance.}
\label{tab:easy_hard}
\small
\renewcommand{\arraystretch}{0.9}
\setlength\extrarowheight{0pt}
\resizebox{\linewidth}{!}{%
\begin{tabular}{c|c|c|c}
\toprule
Quantile & 0.2 & 0.4 & 0.6 \\
\midrule
Argmax-prob & +14.37 \% & +7.46 \% & +3.97 \% \\
CIA-RankSEG & +2.51 \% & +1.04 \% & +0.48 \% \\
\bottomrule
\end{tabular}
}
\vspace{-1em}
\end{wraptable}

\textit{Easy vs. Hard cases:} We split KiTS into subsets by the lower quantile of Argmax-prob performance, where lower values correspond to harder cases. As shown in \cref{tab:easy_hard}, our method yields larger gains on harder subsets, while improvements on easier subsets become modest. This confirms that modeling label dependence is most valuable when segmentation is challenging.

\begin{wraptable}{r}{0.45\textwidth}
\vspace{-1em}
\centering
\caption{Relative Dice improvements over baselines on ADE20K across category groups, defined by average pixel area.}
\label{tab:small_large}
\small
\renewcommand{\arraystretch}{0.9}
\setlength\extrarowheight{0pt}
\resizebox{\linewidth}{!}{%
\begin{tabular}{c|c|c|c}
\toprule
Group & small & medium & large \\
\midrule
Argmax-prob & +5.47 \% & +2.92 \% & +2.01 \% \\
CIA-RankSEG & +1.19 \% & +0.82 \% & +0.42 \% \\
\bottomrule
\end{tabular}
}
\vspace{-1em}
\end{wraptable}

\textit{Small vs. Large categories:} We divide ADE20K categories into small, medium, and large groups of equal size based on their average pixel area. As shown in \cref{tab:small_large}, we observe a similar trend: improvements are more pronounced on smaller categories, which are typically harder to segment and benefit more from modeling label dependence.

\vspace{-0.5em}
\subsection{Statistical Significance of Improvements}
\label{sec:stat_significance}
\begin{figure}[t]
\vspace{-1.2em}
\centering
\begin{subfigure}{0.32\textwidth}
    \centering
    \includegraphics[width=\linewidth]{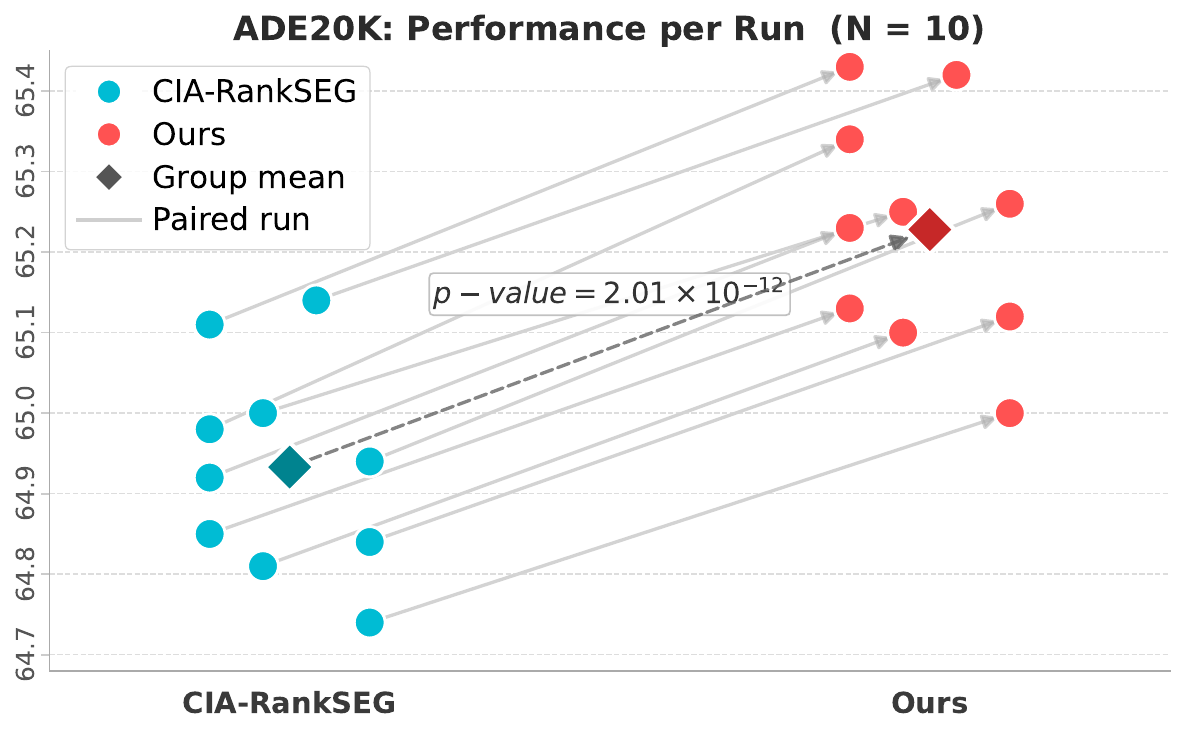}
    \label{fig:per_run_ade20k}
\end{subfigure}
\hfill
\begin{subfigure}{0.32\textwidth}
    \centering
    \includegraphics[width=\linewidth]{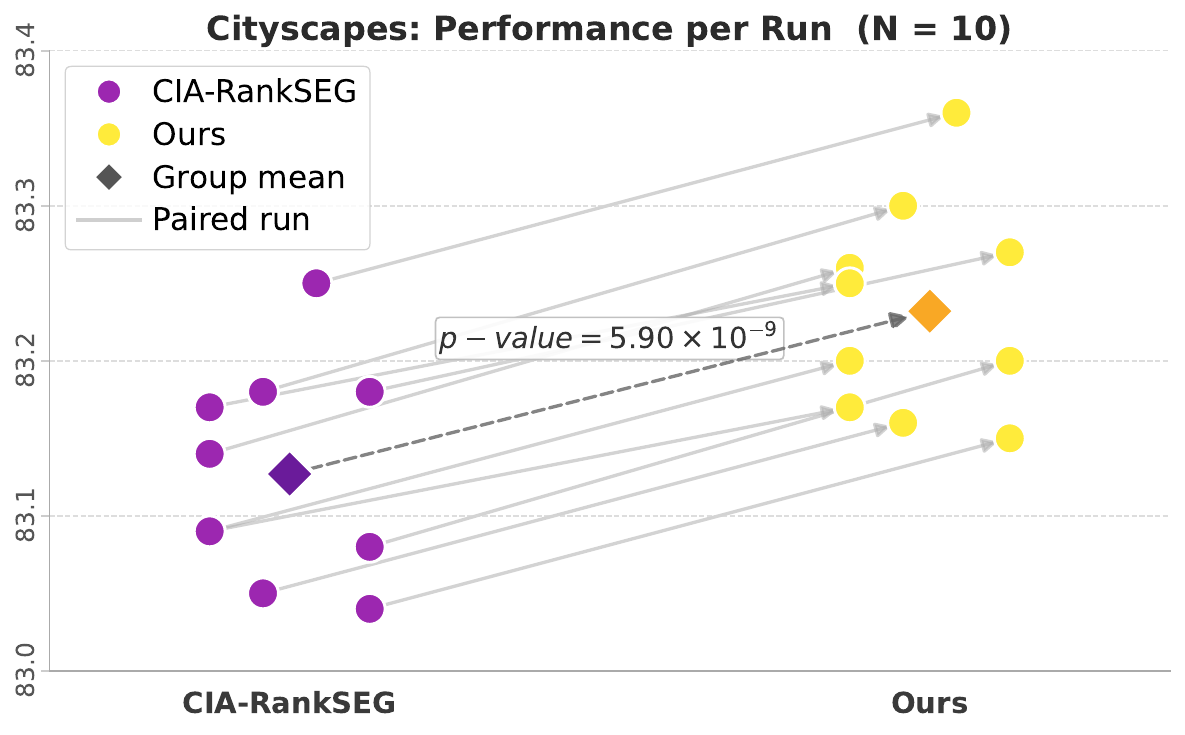}
    \label{fig:per_run_cityscapes}
\end{subfigure}
\hfill
\begin{subfigure}{0.32\textwidth}
    \centering
    \includegraphics[width=\linewidth]{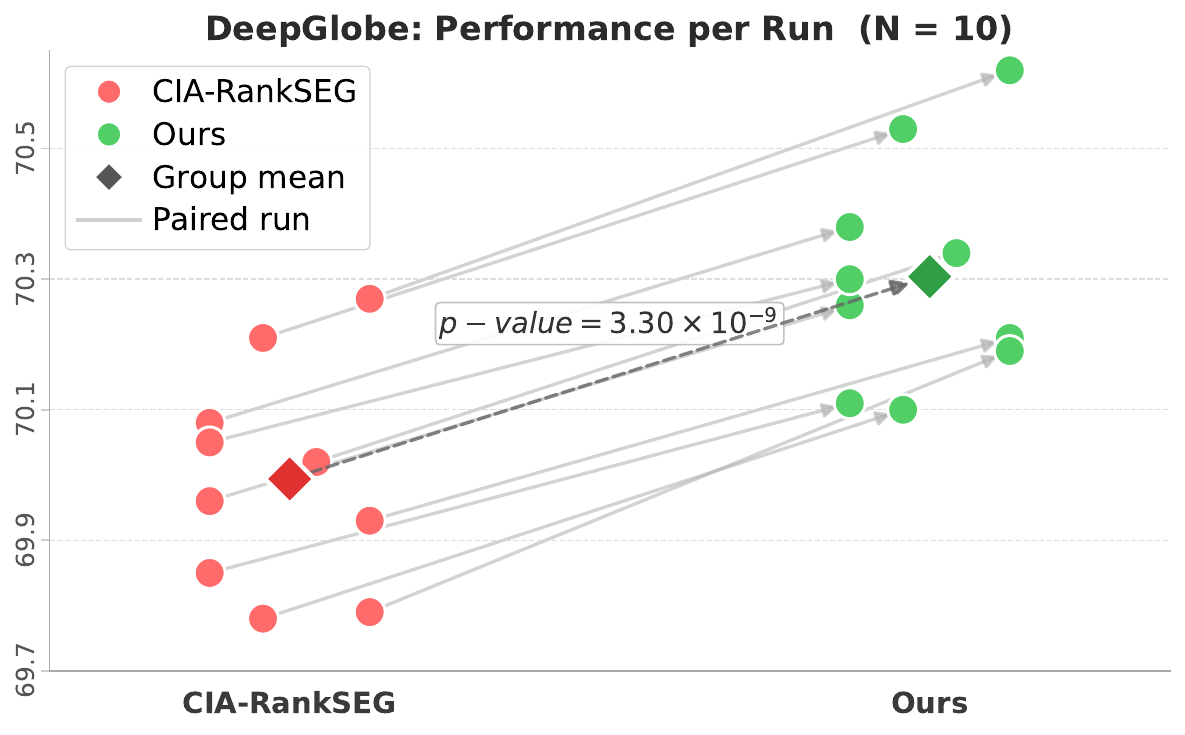}
    \label{fig:per_run_deepglobe}
\end{subfigure}
\vspace{-1.2em}
\caption{Our method consistently outperforms CIA-RankSEG across 10 retraining runs on (a) ADE20K, (b) Cityscapes, and (c) DeepGlobe, with paired $t$-test $p$-values $\ll 0.01$ confirming the statistical significance.}
\label{fig:per_run}
\vspace{-1.2em}
\end{figure}

We conduct 10 independent retraining runs with different random seeds on ADE20K, Cityscapes, and DeepGlobe using UPerNet, and report per-run performance comparison between CIA-RankSEG and our method in \cref{fig:per_run}. Each plot also reports the p-value of a paired t-test, which is well below 0.01, confirming the statistical significance of the improvements. Furthermore, \textbf{our method consistently outperforms CIA-RankSEG across all runs, reinforcing its role as an effective training-free inference module that squeezes additional performance from pre-trained networks.}

\subsection{Runtime Comparison}
\vspace{-0.3em}

To assess practicality, we compare the runtime of our method with CIA-RankSEG, using DeepLabV3+ on LiTS/KiTS and UPerNet on the remaining datasets. All experiments run on a single NVIDIA A100 GPU, with runtime measured over the entire inference process on the test set. As shown in \cref{fig:runtime}, the model forward pass (bottom bar) dominates the overall inference time, and the additional cost introduced by our method is marginal relative to CIA-RankSEG. This efficiency stems from a careful algorithmic design that reduces the complexity to $\mcal{O}(d \log d)$, matching CIA-RankSEG up to a slightly larger constant factor arising from the additional convolution steps. These results confirm that our method serves as a practical module that can be readily integrated into existing segmentation pipelines without incurring significant computational overhead.

\vspace{-0.4em}
\subsection{Parameter Robustness in SLD}
\vspace{-0.4em}

Our method introduces a single hyperparameter, the SLD kernel bandwidth $\theta$, which controls the decay rate of label covariance. We perform a sensitivity analysis by sweeping $\theta$ from $3$ to $600$ on KiTS and LiTS with DeepLabV3+. As shown in \cref{fig:theta}, the curve exhibits two distinct regimes. \textbf{(a) Robust plateau ($100 \le \theta \le 600$):} performance is nearly invariant across this range, indicating that our method requires no careful tuning and delivers consistent gains out of the box. \textbf{(b) Baseline convergence ($\theta < 100$):} as $\theta \to 0$, performance degrades toward the CIA-RankSEG baseline. This behavior directly follows from \eqref{eq:sld}: as $\theta \to 0$, the covariance vanishes for all distinct label pairs, reducing SLD to CIA. This further validates our theoretical motivation of our method, and serves as an implicit ablation, underscoring the importance of incorporating label dependence.

\begin{figure}[h]
\centering
\begin{minipage}{0.47\textwidth}
    \centering
    \includegraphics[width=\linewidth]{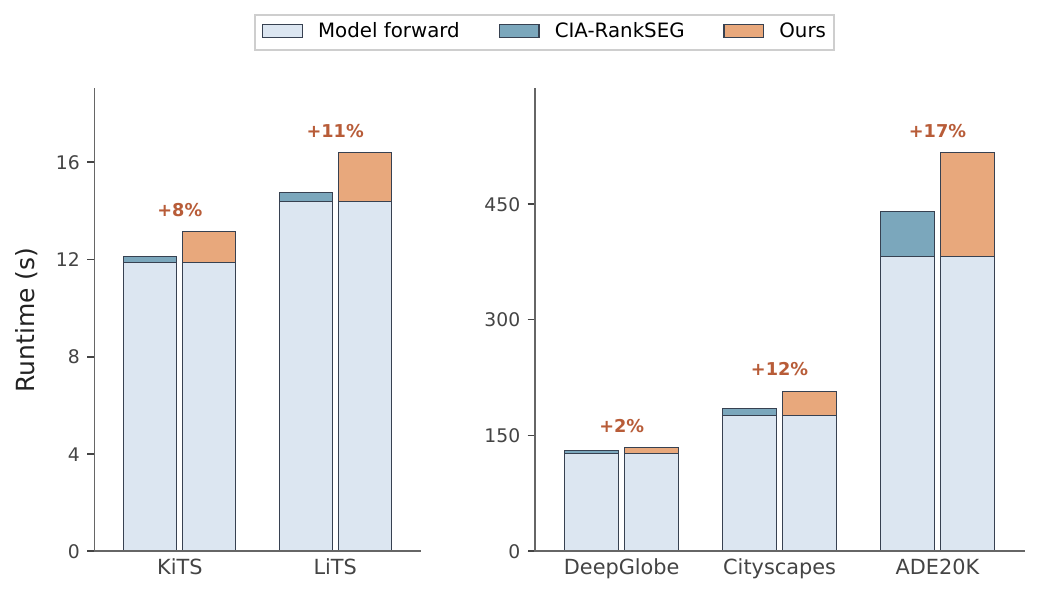}
    \vspace{-0.5em}
    \caption{Runtime comparison between CIA-RankSEG and our method during the inference. The additional cost of our method is minor compared to the overall inference time.}
    \label{fig:runtime}
\end{minipage}
\hfill
\begin{minipage}{0.47\textwidth}
    \centering
    \includegraphics[width=\linewidth]{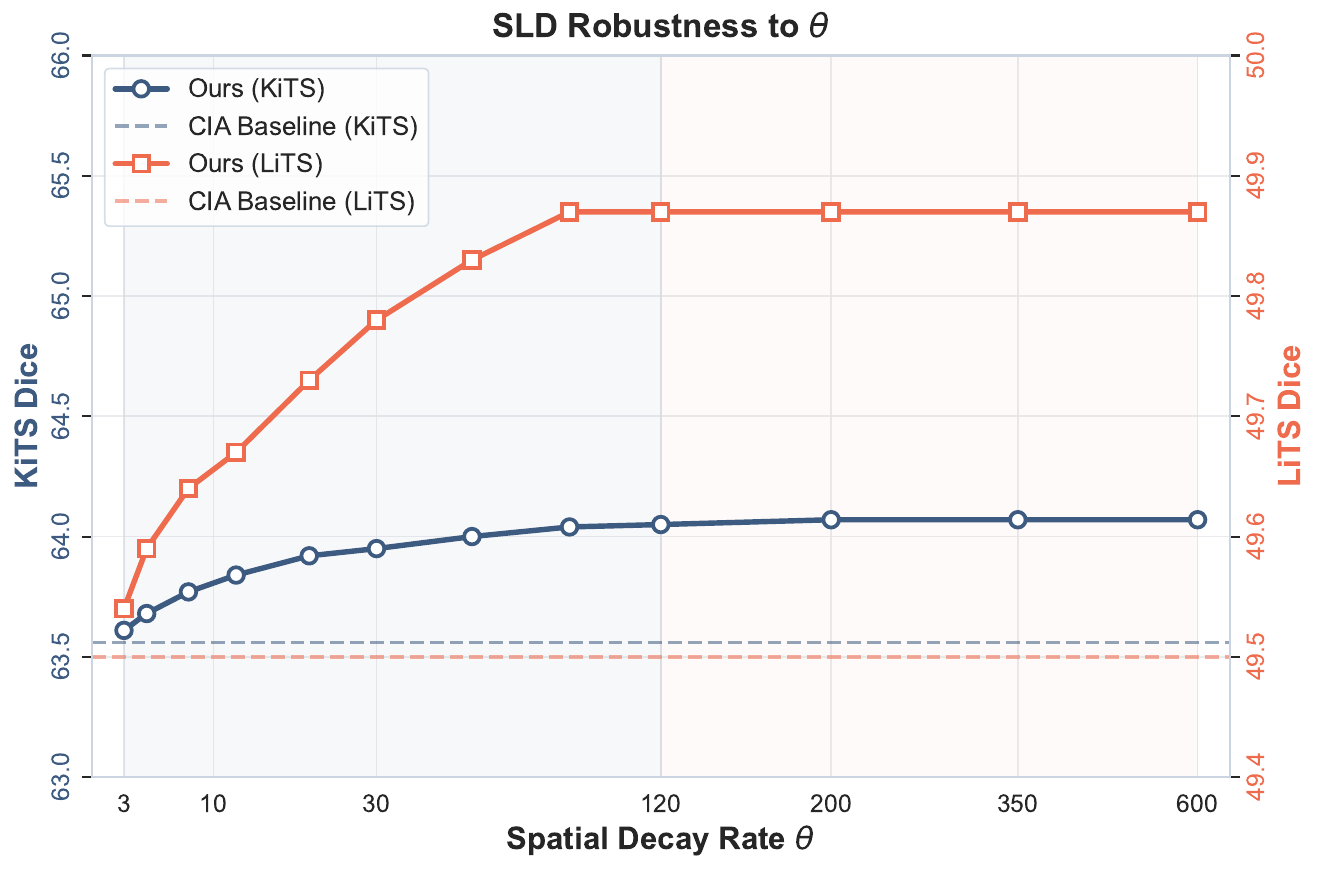}
    \vspace{-0.5em}
    \caption{Robustness to $\theta$ in SLD. The performance is stable and saturates as $\theta \ge 100$.}
    \label{fig:theta}
\end{minipage}
\vspace{-0.5em}
\end{figure}
\vspace{-0.5em}

\vspace{-0.5em}
\section{Conclusion}
\label{sec:conclusion}
\vspace{-0.5em}
In this work, we addressed a critical limitation in image segmentation by relaxing the CIA to SLD modeling. We incorporated local label correlations into RankSEG while developing an efficient algorithm for optimizing Dice/IoU with practical $\mcal{O}(d \log d)$ complexity. Extensive experiments across multiple datasets demonstrate that our method consistently outperforms CIA-based methods, especially on challenging cases where image information is noisy and label dependence is critical. One limitation of our work is that the proposed SLD captures label dependence purely through spatial structure, without exploiting appearance cues such as similarities between $X_i$ and $X_j$ or between $p_i$ and $p_j$. Incorporating such cues is highly non-trivial, as it requires careful redesign to effectively capture the complex interplay between appearance and spatial structure, while preserving computational efficiency. We leave this as a promising direction for future work.

\bibliography{ref}
\bibliographystyle{unsrtnat}

\newpage
\appendix
\crefalias{section}{appendix}
\crefalias{subsection}{subappendix}

\addcontentsline{toc}{section}{Appendix} %
\part{} %
\parttoc %

\section{Proofs}

\subsection{Proof of \cref{lem:fixed}}

\begin{proof}
  From \eqref{eq:tau_opt}, $\tau^*$ can be expressed as:
  \begin{align*}
    \tau^* = \textstyle{\argmax_{\tau \in [d]} \max_{\bm{l} \in \rho_{[d]}} \sum_{j=1}^{\tau} M_{l_j, \tau}},
  \end{align*}
  where $\rho_{[d]}$ denotes the set of all permutations of $[d]$. Since $\bm{o}(\tau^*) = {argsort}(\bm{M}_{1:d, \tau^*})$ already maximizes the inner sum for $\tau^*$, so we have
  \begin{align*}
    \sum_{j=1}^{\tau^*} M_{o_j(\tau^*), \tau^*} = \textstyle{\max_{\tau \in [d]} \max_{\bm{l} \in \rho_{[d]}} \sum_{j=1}^{\tau} M_{l_j, \tau}} \ge \textstyle{\max_{\tau \in [d]}} \sum_{j=1}^{\tau} M_{o_j(\tau^*), \tau}
  \end{align*}
  The inequality is due to replacing the inner maximizer with a specific choice $\bm{o}(\tau^*)$. The right-hand side is precisely what is maximized in $T$, and thus $T(\tau^*) = \tau^*$.
\end{proof}

\subsection{Proof of \cref{lem:ascent}}

\begin{proof}
For any $\tau \in [d]$, we can establish the following chain of inequalities:
  \begin{equation*}
    \begin{aligned}
      \Phi(T(\tau)) &= \pi_{T(\tau)}\big(\bm{o}(T(\tau))\big) & \left( = \sum_{j=1}^{\textcolor{green!50!black}{T(\tau)}} M_{o_j(\textcolor{green!50!black}{T(\tau)}), \textcolor{green!50!black}{T(\tau)}} \right) \\
      &\geq \pi_{T(\tau)}\big(\bm{o}(\tau)\big) & \left( = \sum_{j=1}^{\textcolor{green!50!black}{T(\tau)}} M_{o_j(\textcolor{red}{\tau}), \textcolor{green!50!black}{T(\tau)}} \right) \\
      &\geq \pi_{\tau}\big(\bm{o}(\tau)\big) & \left( = \sum_{j=1}^{\textcolor{red}{\tau}} M_{o_j(\textcolor{red}{\tau}), \textcolor{red}{\tau}} \right) \\
      &= \Phi(\tau).
    \end{aligned}
  \end{equation*}
  The first inequality holds because 
  $\bm{o}(T(\tau)) = \text{argsort}(\bm{M}_{1:d,T(\tau)})$ maximizes the 
  top-$T(\tau)$ partial sum over all permutations, and the second 
  follows from $T(\tau) = \argmax_{\tau'} \pi_{\tau'}(\bm{o}(\tau))$. 
  This establishes claim~(a).

  For claim~(b), if the maximizer in $T$ is always unique and $T(\tau) \neq \tau$, the second inequality above is strict, giving $\Phi(T(\tau)) > \Phi(\tau)$.
  Applied along the iterates, this means $\Phi(\tau^{(t+1)}) > \Phi(\tau^{(t)})$ whenever $T(\tau^{(t)}) \neq \tau^{(t)}$. If this strict inequality were to hold for all $t \in \{0,1,\dots,d\}$, we would obtain $d+1$ pairwise distinct values $\Phi(\tau^{(0)}) < \Phi(\tau^{(1)}) < \cdots < \Phi(\tau^{(d)})$, contradicting the fact that $\Phi$ takes at most $d$ values on the finite set $[d]$. Hence there exists a smallest index $t^\star \leq d$ with $T(\tau^{(t^\star)}) = \tau^{(t^\star)}$. Setting $\tau^\star := \tau^{(t^\star)}$, a trivial induction gives $\tau^{(t^\star + s)} = \tau^\star$ for all $s \geq 0$, so the iterates reach the fixed point $\tau^\star$ of $T$ in at most $d$ steps.
\end{proof}

\subsection{Proof of \cref{lem:global_ranking}}

\begin{proof}
  Recall the definition $M_{j,\tau} = p_j / (\tau + \mu_j)$. For any $j \neq j^{\prime}$ such that $p_j \ge p_{j^{\prime}}$, we have $\mu_j \le \mu_{j^{\prime}}$ by assumption, and thus
  \begin{align*}
    M_{j,\tau} = \frac{p_j}{\tau + \mu_j} \ge \frac{p_{j^{\prime}}}{\tau + \mu_{j^{\prime}}} = M_{j^{\prime}, \tau} \quad \text{for all } \tau.
  \end{align*}
  Hence, the ranking of $\bm{M}_{1:d, \tau}$ is exactly identical to that of $\bm{p}$ for all $\tau$. In particular, $\bm{o}(\tau) = \text{argsort}(\bm{M}_{1:d, \tau})=\text{argsort}(\bm{p})$ for all $\tau$. Since we initialize with $\bm{o}(\tau^{(0)}) = \text{argsort}(\bm{p})$, the iteration converges in one step.
\end{proof}

\subsection{Proof of \cref{thm:error}}

\begin{proof}
  Without loss of generality, let the image be an $L \times L$ grid where $d = L^2$. Each pixel index $i \in [d]$ maps to a 2D coordinate $(u_i, v_i) \in [L]^2$ given by:
  \begin{align*}
    u_i = \lfloor (i-1)/L \rfloor + 1, \quad v_i = ((i-1)\text{ mod } L) + 1.
  \end{align*}
  The squared Euclidean distance between pixels $i$ and $k$ is then given by:
  \begin{align*}
    r(i,k)^2 = (u_i - u_k)^2 + (v_i - v_k)^2.
  \end{align*}

  \textbf{Step 1: Bound the unconditional variance.} Denote $\Gamma = \sum_{i=1}^d Y_i$. Its variance under SLD is:
  \begin{align*}
    \text{Var}(\Gamma) = \sum_{i=1}^{d} \sum_{k=1}^d \Sigma_{ik}, \quad \text{where} \quad \Sigma_{ik} = \sqrt{\Sigma_{ii}\Sigma_{kk}} \exp\left( -\frac{r(i,k)^2}{2\theta^2} \right).
  \end{align*}
  Note $\Sigma_{ii} = p_i(1-p_i) \le 1/4$ for all $i$. It follows that
  \begin{align*}
    \Sigma_{ik} \le 0.25 \exp\left(-\frac{(u_i - u_k)^2}{2\theta^2} \right) \exp\left(-\frac{(v_i - v_k)^2}{2\theta^2} \right).
  \end{align*}
  For a fixed pixel $i$, we can extend the summation boundaries from the finite $L \times L$ grid to the infinite $\mathbb{Z}^2$ grid, and obtain
  \begin{align*}
    \sum_{k=1}^d \Sigma_{ik} \le 0.25 \left( \sum_{\Delta u=-\infty}^{\infty} \exp\left( -\frac{\Delta u^2}{2\theta^2} \right) \right) \left( \sum_{\Delta v=-\infty}^{\infty} \exp\left( -\frac{\Delta v^2}{2\theta^2} \right) \right).
  \end{align*}
  Consider $\sum_{k=-\infty}^{\infty} \exp(-k^2/2\theta^2)$, since the terms decay exponentially, the series converges to a finite constant that depends only on $\theta$, denoted as $C_{\theta}$. Hence $\sum_{k=1}^d \Sigma_{ik} \le 0.25 C_{\theta}^2 =: K_{\theta}$ for all $i$, and therefore
  \begin{align*}
    \text{Var}(\Gamma) = \sum_{i=1}^d \sum_{k=1}^d \Sigma_{ik} \le \sum_{i=1}^d K_{\theta} = K_{\theta} d = \mcal{O}(d).
  \end{align*}
  \textbf{Step 2: Bound the conditional variance.} We decompose $\text{Var}(\Gamma)$ conditioning on $Y_j$ by the law of total variance:
  \begin{align*}
    \text{Var}(\Gamma) = \mbb{E}_{Y_j}\left[ \text{Var}(\Gamma \mid Y_j) \right] + \text{Var}_{Y_j}\left[ \mbb{E}(\Gamma \mid Y_j) \right].
  \end{align*}
  Dropping the second term gives
  \begin{align*}
    & p_j \text{Var}(\Gamma \mid Y_j=1) + (1-p_j) \text{Var}(\Gamma \mid Y_j=0) = \mbb{E}_{Y_j}\left[ \text{Var}(\Gamma \mid Y_j) \right] \le \text{Var}(\Gamma) = \mcal{O}(d).
  \end{align*}
  Since $p_j \ge c > 0$ for some constant $c$, we have $\text{Var}(\Gamma \mid Y_j=1) = \mcal{O}(d)$. This proves claim~(a).

  \textbf{Step 3: Bound the RMA error.} From \cref{thm:rma}, the approximation error of RMA is bounded by $\mcal{E} \le \sigma_j^2 / [\tau(\mu_j(\mu_j+\tau) + \sigma_j^2)]$. Note that $|M_{j,\tau}-S_{j,\tau}| \le p_j \mcal{E}$, and $p_j \le 1$. Under the additional assumption $\mu_j = \Theta(d)$, we have:
  \begin{align*}
    |M_{j,\tau} - S_{j,\tau}| \le \mcal{E} \le \frac{\sigma_j^2}{\tau[\mu_j(\mu_j+\tau) + \sigma_j^2]} = \frac{\mcal{O}(d)}{\tau[\Theta(d) (\Theta(d) + \tau) + \mcal{O}(d)]} \le \frac{\mcal{O}(d)}{\Theta(d^2)} = \mcal{O}(d^{-1}).
  \end{align*}
  This proves claim~(b).
\end{proof}

\subsection{Proof of \cref{thm:rankseg} and \cref{thm:rma}}
\cref{thm:rankseg} combines results from \citet{dembczynski2013optimizing} and \citet{dai2023rankseg}, and \cref{thm:rma} is a direct application of \citet[Theorem 1]{wooff1985bounds}. For completeness, we provide the proof of \cref{thm:rankseg} here, while deferring the proof of \cref{thm:rma} to the original paper.

\begin{proof}
  Recall that the expected Dice score for a segmentation mask $\bm{\delta} \in \{0,1\}^d$ is defined as:
  \begin{align*}
    \mbb{E}[\Dice(\bm{\delta})] = \mbb{E}\left[ \frac{2 \bm{\delta}^{\intercal} \bm{Y}}{\| \bm{\delta} \|_1 + \| \bm{Y} \|_1} \right] = \mbb{E}\left[ \frac{2 \sum_{j=1}^d \delta_j Y_j}{\| \bm{\delta} \|_1 + \| \bm{Y} \|_1} \right].
  \end{align*}
  Let $\tau = \| \bm{\delta} \|_1$ denote the predicted volume. The expected Dice score can then be rewritten as:
  \begin{align*}
    \mbb{E}[\Dice(\bm{\delta})] = 2\sum_{j=1}^d \delta_j \mbb{E} \left[ \frac{Y_j}{\tau + \Gamma_j} \right] = 2\sum_{j=1}^d \delta_j p_j \mbb{E} \left[ \frac{1}{\tau + \Gamma_j} \mid Y_j = 1 \right] = 2 \sum_{j=1}^d \delta_j p_j \mbb{E} (\tau + \Gamma_j)^{-1}.
  \end{align*}
  Denote the contribution of pixel $j$ for a fixed volume $\tau$ as $S_{j,\tau} = p_j \mbb{E} (\tau + \Gamma_j)^{-1}$. We decompose the maximization into bi-level optimization:
  \begin{align*}
    \textstyle{\max_{\tau \in [d]} \max_{\{\bm{\delta} \in \{0,1\}^d: \|\bm{\delta}\|_1 = \tau\}}} 2\sum_{j=1}^d \delta_j S_{j,\tau}.
  \end{align*}
  For the inner maximization with $\tau$ fixed, the optimal strategy is to set $\delta_j = 1$ for the top-$\tau$ pixels with the largest $S_{j,\tau}$, yielding $\omega_{\tau} = \sum_{j \in \text{Top}_{\tau}(\bm{S}_{1:d, \tau})} S_{j,\tau}$. For the outer maximization, we evaluate $\omega_{\tau}$ for all $\tau$ and select the best one. This completes the proof of \cref{thm:rankseg}.
\end{proof}

\section{Two-Dimensional Convolutional Form of \texorpdfstring{$\bm{\mu}$}{mu}}
\label{sec:conv2d}

In \cref{sec:rma}, we express $\bm{\mu}$ through a one-dimensional convolution over the vectorized labels for notational simplicity, where the kernel entry $\bm{\kappa}_{d-j+i} = \mcal{K}(|i-j|)$ is indexed by the offset between the 1D indices $i$ and $j$. For images, however, the distance $r(i,j)$ in \cref{asm:local} is the spatial distance between two pixels on the 2D grid, which is not a function of $|i-j|$ after vectorization. In this section, we present the rigorous two-dimensional counterpart, which is what our implementation computes.

\paragraph{Setup.} Consider an image of height $H$ and width $W$, so that $d = HW$. Following the notation in the proof of \cref{thm:error}, each pixel index $j \in [d]$ maps to a 2D coordinate $(u_j, v_j) \in [H] \times [W]$ under the row-major ordering:
\begin{align*}
  u_j = \lfloor (j-1)/W \rfloor + 1, \quad v_j = ((j-1)\text{ mod } W) + 1,
\end{align*}
and the distance in \cref{asm:local} is the Euclidean distance on the grid, i.e., $r(i,j) = \sqrt{(u_i - u_j)^2 + (v_i - v_j)^2}$. Accordingly, we reshape $\bm{p}$, $\bm{\nu} = \sqrt{\text{diag}(\bm{\Sigma})}$, and $\bm{\mu}$ into matrices $\bm{P}, \bm{V}, \bm{U} \in \mbb{R}^{H \times W}$, with $P_{u_j, v_j} = p_j$, $V_{u_j, v_j} = \sqrt{\Sigma_{jj}}$, and $U_{u_j, v_j} = \mu_j$. Throughout this section, $(u, v) \in [H] \times [W]$ denotes a generic 2D coordinate, while $i, j \in [d]$ are reserved for 1D pixel indices.

\paragraph{2D convolutional form.} Define the 2D kernel $\bm{\mcal{K}} \in \mbb{R}^{(2H-1) \times (2W-1)}$ that stores the kernel values at all possible displacements $(\Delta u, \Delta v)$:
\begin{align*}
  \bm{\mcal{K}}_{H + \Delta u, \, W + \Delta v} = \mcal{K}\Big( \sqrt{\Delta u^2 + \Delta v^2} \Big), \quad |\Delta u| \le H-1, \ |\Delta v| \le W-1,
\end{align*}
so that the center entry $\bm{\mcal{K}}_{H,W} = \mcal{K}(0) = 1$ corresponds to zero displacement. Then the SLD correction term in $\mu_j$ can be written as a 2D convolution (cross-correlation) of $\bm{V}$ with $\bm{\mcal{K}}$:
\begin{align*}
  \sum_{i=1}^d \sqrt{\Sigma_{ii}} \, \mcal{K}(r(i,j))
  &= \sum_{u=1}^{H} \sum_{v=1}^{W} V_{u,v} \, \mcal{K}\Big( \sqrt{(u - u_j)^2 + (v - v_j)^2} \Big) \\
  &= \sum_{u=1}^{H} \sum_{v=1}^{W} V_{u,v} \, \bm{\mcal{K}}_{H + u - u_j, \, W + v - v_j}
  =: (\bm{V} \star \bm{\mcal{K}})_{u_j, v_j}.
\end{align*}
Therefore, computing all $j \in [d]$ simultaneously yields the matrix form
\begin{align}
  \bm{U} = q \cdot \bm{1}_{H \times W} + \frac{\bm{V}}{\bm{P}} \odot (\bm{V} \star \bm{\mcal{K}}), \label{eq:mu_2d}
\end{align}
where $q = \sum_{j=1}^d p_j$, $\bm{1}_{H \times W}$ is the all-ones matrix, and both the division and $\odot$ are element-wise. \eqref{eq:mu_2d} is the exact 2D analogue of the 1D expression in \cref{sec:rma}: the vector $\bm{\kappa} \in \mbb{R}^{2d-1}$ is replaced by the matrix $\bm{\mcal{K}} \in \mbb{R}^{(2H-1) \times (2W-1)}$, and the 1D offset $|i-j|$ is replaced by the 2D displacement $(u_i - u_j, v_i - v_j)$.

\paragraph{Computational complexity.} The 2D convolution $\bm{V} \star \bm{\mcal{K}}$ can be evaluated via the 2D FFT: after zero-padding $\bm{V}$ and $\bm{\mcal{K}}$ to a common size of $\mcal{O}(H) \times \mcal{O}(W)$ to avoid circular wrap-around, the cost is $\mcal{O}(HW \log(HW)) = \mcal{O}(d \log d)$, matching the complexity stated in \cref{sec:rma}. The remaining element-wise operations in \eqref{eq:mu_2d} cost $\mcal{O}(d)$.

Moreover, the Gaussian kernel is separable: since $\mcal{K}(\sqrt{\Delta u^2 + \Delta v^2}) = \exp(-\Delta u^2/2\theta^2) \exp(-\Delta v^2/2\theta^2) = \mcal{K}(|\Delta u|) \mcal{K}(|\Delta v|)$, the 2D kernel is rank-one, $\bm{\mcal{K}} = \bm{\kappa}_H \bm{\kappa}_W^{\intercal}$, where
\begin{gather*}
  \bm{\kappa}_H = (\mcal{K}(H-1), \cdots, \mcal{K}(0), \cdots, \mcal{K}(H-1))^{\intercal} \in \mbb{R}^{2H-1}, \\
  \bm{\kappa}_W = (\mcal{K}(W-1), \cdots, \mcal{K}(0), \cdots, \mcal{K}(W-1))^{\intercal} \in \mbb{R}^{2W-1}.
\end{gather*}
Consequently, the 2D convolution decomposes into two 1D convolutions of the same form as in \cref{sec:rma}: first convolving each column of $\bm{V}$ with $\bm{\kappa}_H$, and then convolving each row of the result with $\bm{\kappa}_W$, i.e.,
\begin{align*}
  (\bm{V} \star \bm{\mcal{K}})_{u_j, v_j} = \sum_{v=1}^{W} \mcal{K}(|v - v_j|) \sum_{u=1}^{H} \mcal{K}(|u - u_j|) \, V_{u,v}.
\end{align*}
Using the 1D FFT for each row and column, the total cost is $\mcal{O}(HW (\log H + \log W)) = \mcal{O}(d \log d)$. This also clarifies the role of the 1D notation in \cref{sec:rma}: it is exactly the building block of the separable 2D convolution. The same argument extends directly to 3D volumes (e.g., $H \times W \times D$ voxels) with a separable 3D Gaussian kernel, again with $\mcal{O}(d \log d)$ complexity.

\section{Efficient Cumulative Sums in Fixed-Point Iteration via Taylor Approximation}
\label{sec:cumsum}

We turn to the efficient implementation of the fixed-point iteration described in \eqref{eq:T}, focusing on the computation after $\bm{o}(\tau)$ is obtained. Without loss of generality, we regard the ranking $\bm{o}(\tau)$ as the natural ordering and simplify the notation to $\pi_{\tau} = \sum_{j=1}^{\tau} p_j / (\tau + \mu_j)$. A direct implementation requires $\mcal{O}(d)$ time for each $\tau$, resulting in $\mcal{O}(d^2)$ total complexity. The core inefficiency arises because the denominator tightly couples $\tau$ and $j$, preventing the reuse of intermediate quantities. To decouple $\tau$ from $j$, we consider the Taylor expansion of $x \mapsto (\tau + x)^{-1}$ around $\bar{\mu}_{\tau} = \frac{1}{\tau} \sum_{j=1}^{\tau} \mu_{j}$:
\begin{align*}
  \frac{1}{\tau + \mu_j} = \frac{1}{\tau + \bar{\mu}_{\tau}} - \frac{\mu_j - \bar{\mu}_{\tau}}{(\tau + \bar{\mu}_{\tau})^2} + \frac{(\mu_j - \bar{\mu}_{\tau})^2}{(\tau + \bar{\mu}_{\tau})^3} + \mcal{O}\left( \frac{(\mu_j - \bar{\mu}_{\tau})^3}{(\tau + \bar{\mu}_{\tau})^4} \right).
\end{align*}
Truncating at the second-order term is an empirical trade-off between approximation accuracy and numerical stability. This yields our final approximation
\begin{align*}
  \pi_{\tau} = \sum_{j=1}^{\tau} \frac{p_j}{\tau + \mu_j} \approx \tilde{\pi}_{\tau} &= \sum_{j=1}^{\tau} p_j \left( \frac{1}{\tau + \bar{\mu}_{\tau}} - \frac{\mu_j - \bar{\mu}_{\tau}}{(\tau + \bar{\mu}_{\tau})^2} + \frac{(\mu_j - \bar{\mu}_{\tau})^2}{(\tau + \bar{\mu}_{\tau})^3} \right) \\
  &= \frac{Z_{\tau}^{0}}{\tau + \bar{\mu}_{\tau}} - \frac{Z_{\tau}^{1} - \bar{\mu}_{\tau}Z_{\tau}^{0}}{(\tau + \bar{\mu}_{\tau})^2} + \frac{Z_{\tau}^{2} - 2\bar{\mu}_{\tau}Z_{\tau}^{1} + \bar{\mu}_{\tau}^2Z_{\tau}^{0}}{(\tau + \bar{\mu}_{\tau})^3}, \\
  \text{where} \quad & Z_{\tau}^{0} = \sum_{j=1}^{\tau} p_j, \quad Z_{\tau}^{1} = \sum_{j=1}^{\tau} p_j \mu_j, \quad Z_{\tau}^{2} = \sum_{j=1}^{\tau}p_j \mu_j^2.
\end{align*}
$\bm{Z}_{1:d}^{0}$, $\bm{Z}_{1:d}^{1}$, and $\bm{Z}_{1:d}^{2}$ can be computed in $\mcal{O}(d)$ via cumulative sums, and $\tilde{\bm{\pi}}_{1:d}$ is obtained in $\mcal{O}(d)$.

\section{Convergence of the Fixed-Point Optimization}
\label{sec:convergence}

In \cref{sec:fixed-point}, we established two theoretical guarantees for the fixed-point iteration \(\tau^{(t+1)} = T(\tau^{(t)})\): under the uniqueness condition for the maximizer of \(T\), the iterates monotonically increase the objective \(\Phi\) and converge to a fixed point (\cref{lem:ascent}); moreover, when the label distribution satisfies certain conditions, convergence to the global optimum \(\tau^{*}\) is achieved in a single step (\cref{lem:global_ranking}). Although these conditions are difficult to verify analytically on real data, the empirical results in this section show that the iteration behaves in close accordance with these guarantees: it converges within a few steps and reaches a point that is essentially near the global optimum. Specifically, we address two questions: (Q1) How many iterations are required for convergence in practice? (Q2) How close is the limit point to the global optimum? The findings below provide strong support for the practical effectiveness of the algorithm and offer empirical validation of \cref{lem:ascent,lem:global_ranking}.

\subsection{Convergence within Three Steps}

\cref{tab:convergence} reports the distribution of the number of iterations required for convergence across all test samples. For multi-class datasets, we average the number of steps over classes for each sample. Across all five benchmarks, the iteration terminates within $2.5$ steps for over $99.8\%$ of samples, and the fraction of samples requiring more than $2.5$ steps never exceeds $0.22\%$.

\begin{table}[h]
\centering
\caption{Distribution of the number of iterations required for convergence across all test samples. On every dataset, the iteration terminates within $2.5$ steps for the vast majority of samples, with only a negligible fraction requiring more.}
\label{tab:convergence}
\begin{tabular}{lcccccc}
\toprule
Dataset & LiTS & KiTS & ADE20K & Cityscapes & DeepGlobe \\
\midrule
\#Steps $< 1.5$ & 63.36\% & 88.52\% & 9.30\% & 2.80\% & 40.49\% \\
$1.5 \le$ \#Steps $< 2.5$ & 36.42\% & 11.41\% & 90.55\% & 97.20\% & 50.51\% \\
$2.5 \le$ \#Steps & 0.22\% & 0.07\% & 0.15\% & 0.00\% & 0.00\% \\
\bottomrule
\end{tabular}
\end{table}

These results indicate that the fixed-point optimization converges substantially faster than the worst-case bound, allowing us to safely conclude that the proposed algorithm runs in $\mcal{O}(d \log d)$ time in practice. We attribute this fast convergence to two factors: (i) the initialization $\tau^{(0)}$ induced by the probabilities $\bm{p}$ is already close to the optimal $\tau^*$, and (ii) the landscape of $T$ is well-behaved, with a large basin of attraction around $\tau^*$.

\subsection{Convergent Point is Near-Optimal}

To assess whether the iteration locates the global optimum, we conduct a brute-force search on LiTS and KiTS, for which exhaustive enumeration is computationally affordable, and compare the performance between the global optimum and the convergent point of the fixed-point optimization.

\begin{table}[h]
\centering
\caption{Comparison between the convergent point obtained by the fixed-point optimization and the global optimum obtained by brute-force search. The performance gap is at most $0.03$ on both metrics, indicating that the iteration consistently recovers a near-optimal fixed point.}
\label{tab:optimality}
\begin{tabular}{l|c|c|c|c}
\toprule
Dataset & \multicolumn{2}{c|}{KiTS} & \multicolumn{2}{c}{LiTS} \\
\midrule
Metric & IoU & Dice & IoU & Dice \\
\midrule
Fixed-point Optimization & 56.70 & 64.07 & 40.48 & 49.88 \\
Brute-force Search & 56.73 & 64.10 & 40.48 & 49.88 \\
\bottomrule
\end{tabular}
\end{table}

As shown in \cref{tab:optimality}, the gap between the fixed-point solution and the global optimum is at most $0.03$ across both datasets and both metrics, and is essentially zero on LiTS. These results suggest that the fixed-point optimization not only converges rapidly but also reliably identifies a solution that is very close to the best achievable performance. This near-optimality can again be attributed to the favorable initialization and the well-behaved landscape of $T$~\citep{tarski1955lattice,ortega2000iterative}.

\section{Discussion on SLD and CRF}
\label{sec:crf}

In ~\cref{sec:sld}, we introduced the SLD model as a more realistic alternative to the CIA for modeling label distributions in image segmentation. A natural question is how SLD relates to the widely used Conditional Random Field (CRF) model~\citep{lafferty2001conditional,krahenbuhl2011efficient}, which also captures local label correlations and acts as a post-processing step following the network outputs~\citep{chen2014semantic}. In this section, we clarify the distinction between these two models and explain why SLD is more suitable for our purpose.

\subsection{The Conditional Random Field (CRF) Model}

CRFs have been a cornerstone of semantic segmentation, typically used to refine pixel-wise predictions by modeling the posterior distribution $\mbb{P}(\bm{Y} \mid \bm{X})$. A CRF represents the conditional distribution as a Gibbs distribution:
\begin{align*}
  \mbb{P}(\bm{Y} \mid \bm{X}) = \frac{1}{Z(\bm{X})}\exp(-E(\bm{Y}, \bm{X})),
\end{align*}
where $E(\bm{Y}, \bm{X})$ is the energy function (we omit the dependence on $\bm{X}$ hereafter), usually decomposed into unary and pairwise potentials~\citep{krahenbuhl2011efficient}:
\begin{align*}
  E(\bm{Y}) = \sum_{i=1}^d \psi_u(Y_i) + \sum_{i<j} \psi_p(Y_i, Y_j).
\end{align*}
The unary potential $\psi_u$ is typically the negative log-probability from the network output, while the pairwise potential $\psi_p$ is designed as:
\begin{align*}
  \psi_p(Y_i, Y_j) = \mu(Y_i, Y_j) \left( w^{(1)} \underbrace{\exp\left( -\frac{|u_i - u_j|^2}{2\theta_{\alpha}^2} - \frac{|X_i-X_j|^2}{2\theta_{\beta}^2} \right)}_{\text{appearance kernel}} + w^{(2)} \underbrace{\exp\left( -\frac{|u_i - u_j|^2}{2\theta_{\gamma}^2} \right)}_{\text{smoothness kernel}} \right),
\end{align*}
where $\mu(Y_i, Y_j) = \mathbbm{1}(Y_i \neq Y_j)$ measures the label compatibility, and $u_i$ and $X_i$ denote the spatial coordinate and RGB value of pixel $i$, respectively. The appearance kernel encourages nearby pixels with similar colors to share the same label, while the smoothness kernel encourages nearby pixels to share the same label regardless of their colors.

After modeling the conditional distribution, CRF-based methods typically perform Maximum a Posteriori (MAP) inference, through certain approximate inference techniques, to obtain the final segmentation mask:
\begin{align*}
  \bm{\delta}_{\text{MAP}} = \argmax_{\bm{\delta} \in \{0,1\}^d} \mbb{P}(\bm{Y} = \bm{\delta} \mid \bm{X}) = \argmin_{\bm{\delta} \in \{0,1\}^d} E(\bm{\delta}).
\end{align*}

\subsection{Key Distinction between SLD and CRF}

Although both SLD and CRF leverage Gaussian kernels to capture spatial locality, they differ fundamentally in their mathematical formulation and intent:
\begin{itemize}[leftmargin=*]
  \item \textbf{Covariance vs. Energy Modeling:} SLD explicitly specifies the \textbf{covariance} structure of the label distribution. This allows us to efficiently compute the moment of $\Gamma_j$, which is the core requirement for optimizing the Dice/IoU expectation. In contrast, CRFs specify the \textbf{joint energy}. The resulting covariance matrix of a CRF is implicitly defined and generally lacks the closed-form structure needed for efficient RankSEG integration.
  \item \textbf{MAP Inference vs. Metric Optimization:} CRF-based methods typically perform MAP inference to find the single most probable joint configuration. However, the MAP solution is not necessarily optimal for set-based metrics like Dice or IoU. SLD is designed specifically to facilitate direct metric optimization by $\max \mbb{E}[\Dice(\bm{\delta})]$.
  \item \textbf{Parameter Complexity:} SLD requires only a single decay parameter $\theta$ to control spatial dependence. CRFs typically rely on a suite of hyperparameters ($\theta_{\alpha}$, $\theta_{\beta}$, $\theta_{\gamma}$, $w^{(1)}$, and $w^{(2)}$) that often require exhaustive cross-validation to tune for specific datasets.
\end{itemize}

The diminishing returns of CRF post-processing in modern pipelines are illustrated by the evolution of the DeepLab series. While DeepLabv2~\citep{chen2014semantic} utilized CRFs to sharpen boundaries, subsequent versions~\citep{chen2017rethinking,chen2018encoder} removed them, finding that deep architectures capture sufficient context internally. 
\begin{table}[t]
  \centering
  \caption{Performance comparison of CRF, Argmax-prob, CIA-RankSEG, and our method using UPerNet.}
  \label{tab:crf}
  \begin{tabular}{lcccccccc}
    \toprule
    \multirow{2}{*}{Method} & \multicolumn{2}{c}{CRF} & \multicolumn{2}{c}{Argmax-prob} & \multicolumn{2}{c}{CIA-RankSEG} & \multicolumn{2}{c}{Ours} \\
    \cmidrule(lr){2-3} \cmidrule(lr){4-5} \cmidrule(lr){6-7} \cmidrule(lr){8-9}
    & mIoU & mDice & mIoU & mDice & mIoU & mDice & mIoU & mDice \\
    \midrule
    DeepGlobe & 58.61 & 67.40 &59.49 & 68.60 & 60.19 & 69.57 & \textbf{60.52} & \textbf{69.91} \\
    Cityscapes & 73.26 & 80.35 & 75.66 & 82.61 & 76.17 & 83.21 & \textbf{76.25} & \textbf{83.31} \\
    ADE20K & 56.05 & 62.61 & 56.94 & 63.98 & 57.67 & 64.92 & \textbf{57.94} & \textbf{65.19} \\
    \bottomrule
  \end{tabular}
\end{table}

Our empirical results in \cref{tab:crf} corroborate this: CRFs can even degrade performance compared to simple Argmax-prob. While the MAP prior was useful for refining probability masks in early vision tasks, it may actually hinder performance when applied to the highly accurate, calibrated probabilities produced by modern networks.

\section{Extension to Multi-class Segmentation}
\label{sec:multi-class}

In non-overlapping multi-class segmentation, each pixel is assigned to exactly one class; that is, we seek $\bm{\phi} \in [C]^d$, where $C$ denotes the number of classes. To extend our method to this setting, we adopt the \textit{incremental score} strategy of~\citet{wang2025rankseg}. Specifically, we first apply the RankSEG algorithm independently to each class, yielding $C$ binary masks $\bm{\delta} \in \{0,1\}^{C \times d}$. Based on these masks, we define the following three index sets:
\begin{align*}
  &\text{Pixels predicted as class } c: \quad & \mcal{I}_c^{+} &= \{j: \delta_{c,j}=1\}, \\
  &\text{Overlapping pixels:} \quad & \mcal{I}^{\text{overlap}} &= \cup_{c \neq c^{\prime}} (\mcal{I}_c^{+} \cap \mcal{I}_{c^{\prime}}^{+}), \\
  &\text{Non-overlapping pixels of class } c: \quad & \mcal{I}_c &= \mcal{I}_c^{+} \setminus \mcal{I}^{\text{overlap}}.
\end{align*}

Pixels in $\mcal{I}_c$ are safely assigned to class $c$. For each overlapping pixel $j \in \mcal{I}^{\text{overlap}}$, we assign it to the class that yields the largest incremental score:
\begin{align*}
  & \phi_j = \argmax_{c \in [C]} \Delta_{c,j}, \quad \forall j \in \mcal{I}^{\text{overlap}}, \\
  \text{where} \quad & \Delta_{c,j} = \underbrace{\left(\sum_{i \in \mcal{I}_c} \frac{p_{c,i}}{|\mcal{I}_c| + 1 + \mu_{c,i}} + \frac{p_{c,j}}{|\mcal{I}_c| + 1 + \mu_{c,j}}\right)}_{\text{score \textcolor{red}{after} adding pixel } j \text{ to class } c} - \underbrace{\sum_{i \in \mcal{I}_c} \frac{p_{c,i}}{|\mcal{I}_c| + \mu_{c,i}}}_{\text{score \textcolor{red}{before} adding pixel } j \text{ to class } c}.
\end{align*}
Here, $p_{c,i}$ and $\mu_{c,i}$ denote the probability and conditional mean of pixel $i$ for class $c$, respectively, analogous to the binary case. Since $\Delta_{c,j}$ quantifies the score gain from assigning pixel $j$ to class $c$, it is natural to assign $j$ to the class that maximizes this gain. This procedure is guaranteed to produce a valid non-overlapping segmentation mask.

\section{Training Details}
\label{sec:training}

The training settings follow~\citet{wang2023revisiting,wang2025rankseg}, and we provide the details here for completeness. All models are trained with the standard cross-entropy loss to estimate calibrated probability masks. For DeepGlobe Land, Cityscapes, and ADE20K, we adopt the AdamW optimizer with a weight decay of 0.01. The learning rate starts from $1\mathrm{e}\text{-}6$ and is linearly warmed up during the first $1\%$ of iterations to the initial learning rate of $6\mathrm{e}\text{-}5$. Subsequently, the learning rate is decayed under a ``poly'' policy with an exponent of 1. The number of warm-up iterations is 400 for Cityscapes, and 800 for ADE20K. The total number of training iterations is 20{,}000 for DeepGlobe Land, 40{,}000 for Cityscapes, and 80{,}000 for ADE20K. Data augmentation consists of (i) random scaling within the range $[0.5, 2.0]$ and (ii) random horizontal flipping with a probability of 0.5. Since ADE20K and Cityscapes provide designated validation sets, we report performance on the validation set for these two datasets. For DeepGlobe Land, we manually split the original training set into a training subset ($80\%$) and a validation subset ($20\%$), and report performance on the validation subset.

For LiTS and KiTS, we train the models using SGD with an initial learning rate of $0.01$, momentum of $0.9$, and weight decay of $0.0005$. The learning rate is decayed under a ``poly'' policy with an exponent of 0.9. The batch size is $8$, and the number of epochs is $60$. Although these two datasets are originally multi-class segmentation tasks, we convert them into binary segmentation problems by treating only the tumor as the foreground. This conversion is necessary because we compare our method with RankDice-BA, which is applicable only to binary segmentation. Moreover, since LiTS and KiTS do not include designated test sets, we employ 5-fold cross-validation to evaluate performance.

\section{Qualitative Results}

\begin{figure}[htbp]
  \centering
  \includegraphics[width=\textwidth]{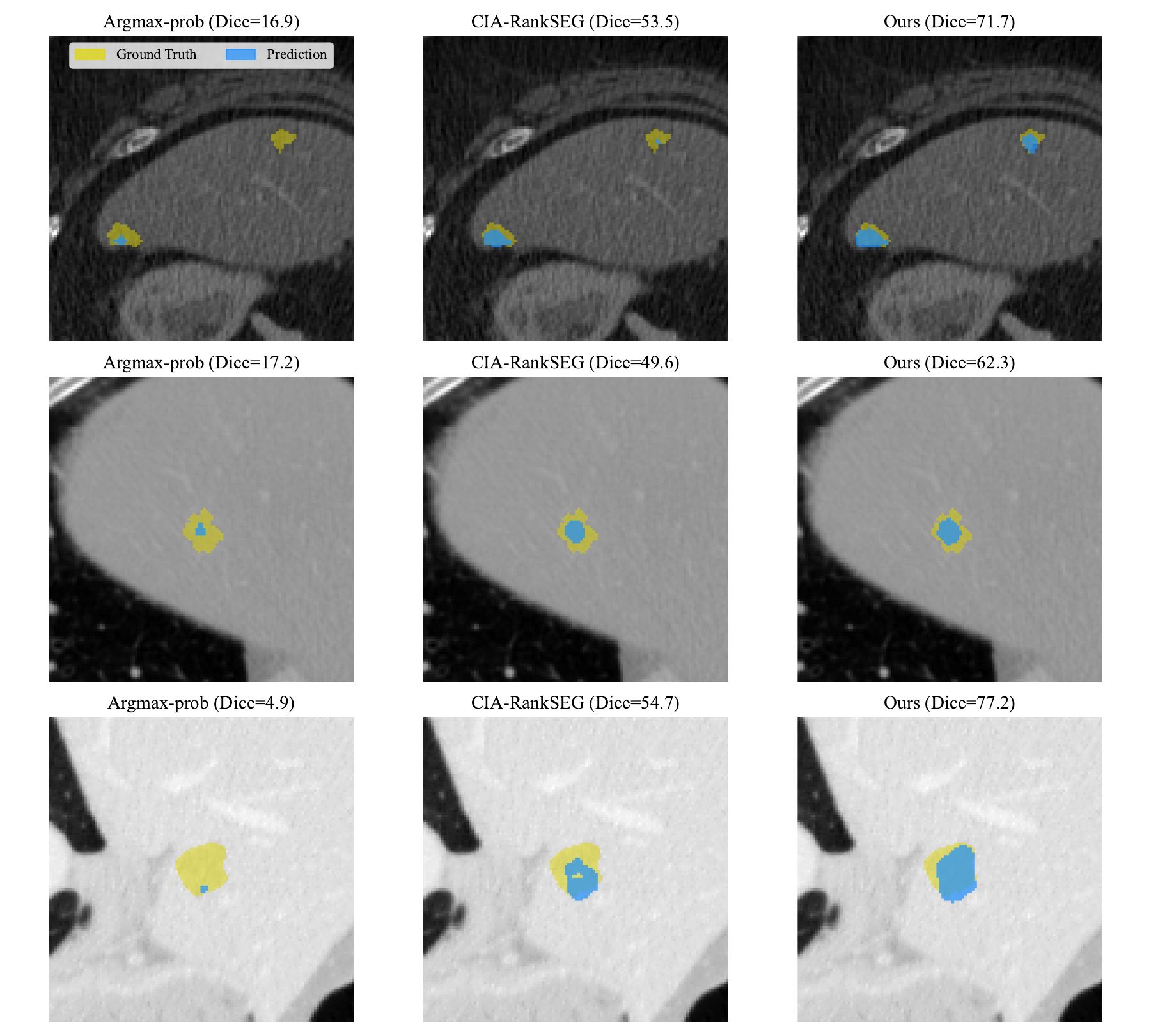}
  \caption{Qualitative comparison of segmentation masks produced by different methods on LiTS.}
  \label{fig:lits_viz}
\end{figure}

\begin{figure}[htbp]
  \centering
  \includegraphics[width=\textwidth]{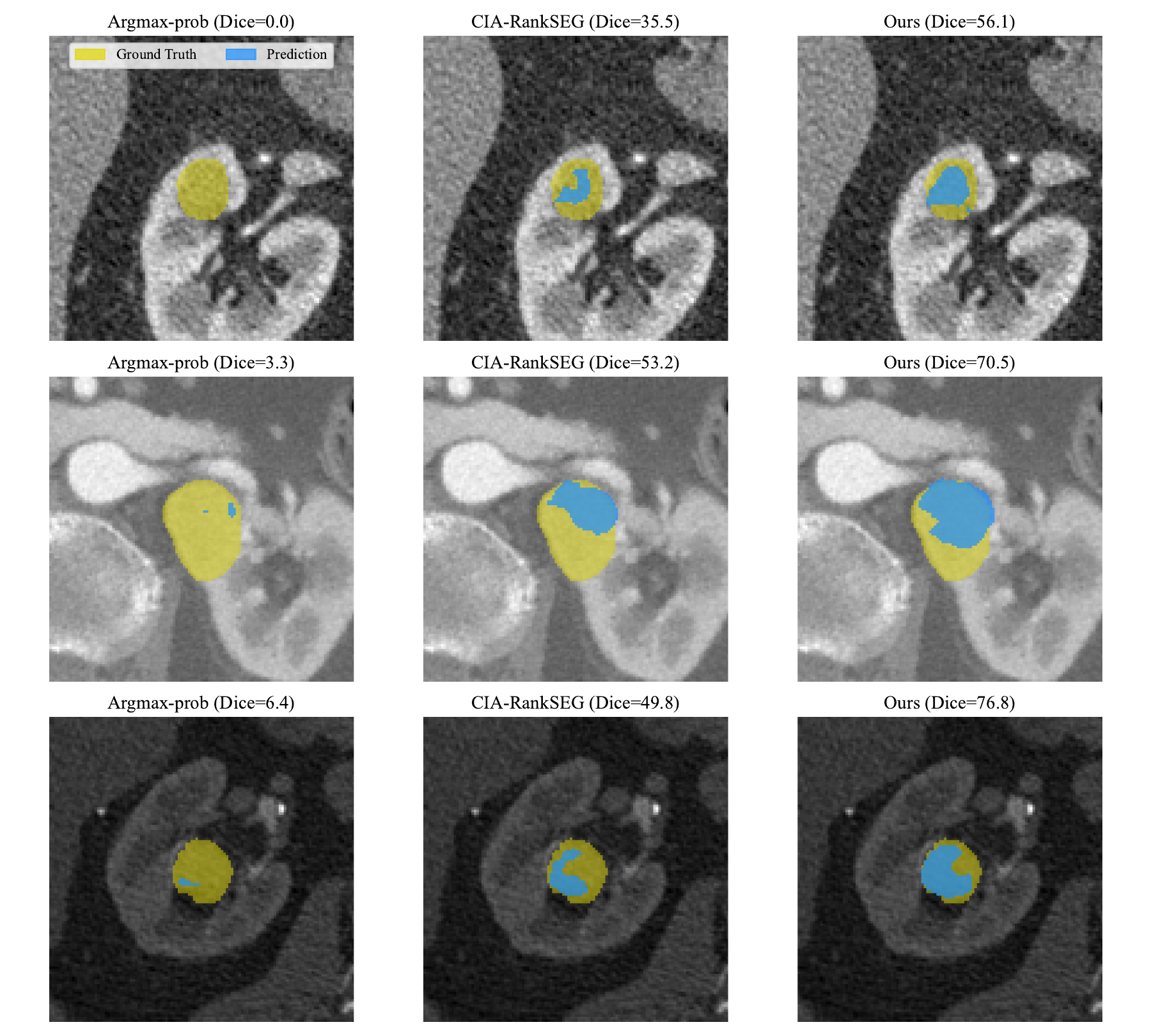}
  \caption{Qualitative comparison of segmentation masks produced by different methods on KiTS.}
  \label{fig:kits_viz}
\end{figure}

\begin{figure}[htbp]
  \centering
  \includegraphics[width=\textwidth]{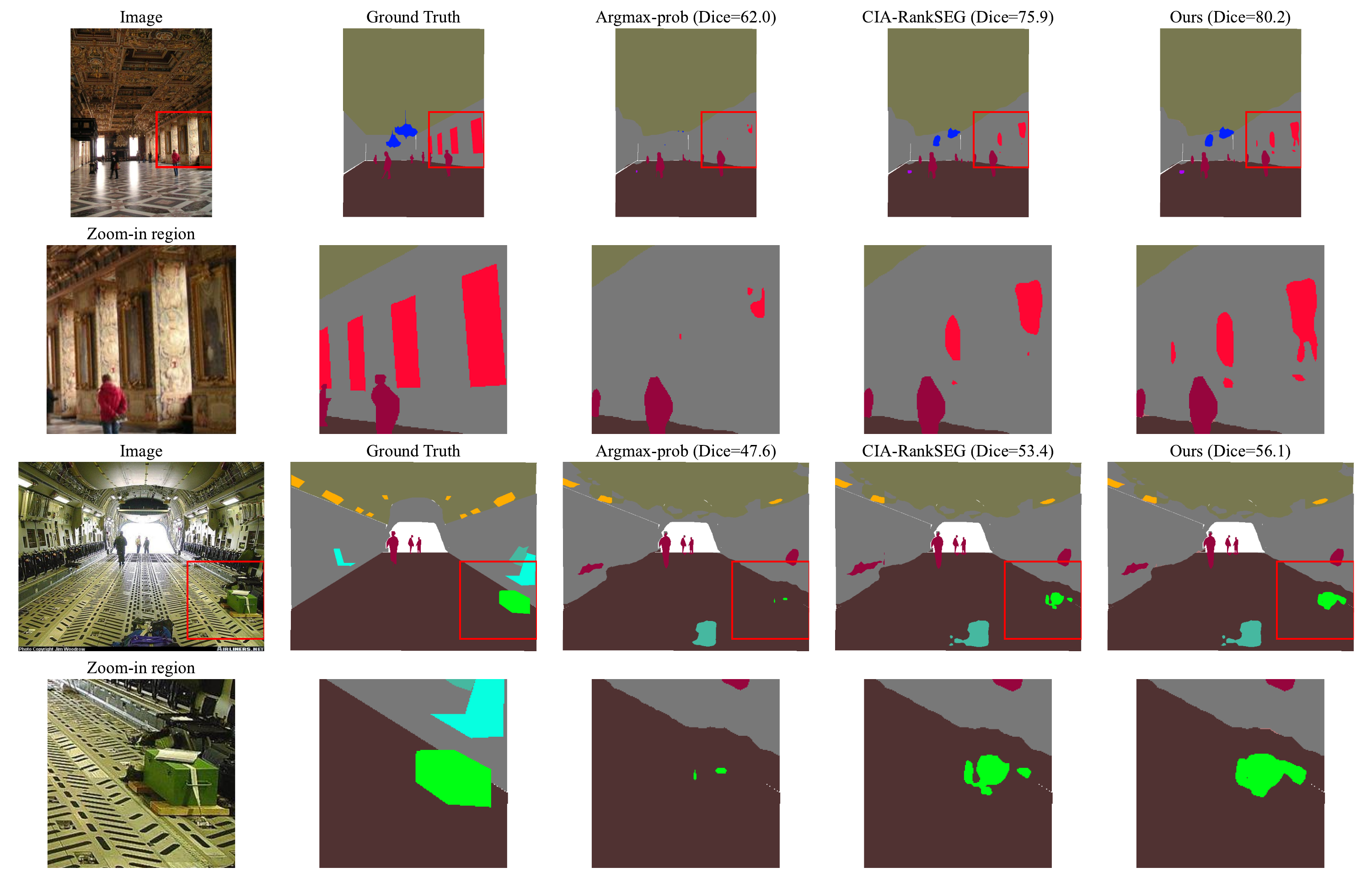}
  \caption{Qualitative comparison on ADE20K. Each case spans two rows, where the first row shows the original image and the second row zooms in on the most distinctive region.}
  \label{fig:ade20k_viz}
\end{figure}

\begin{figure}[htbp]
  \centering
  \includegraphics[width=\textwidth]{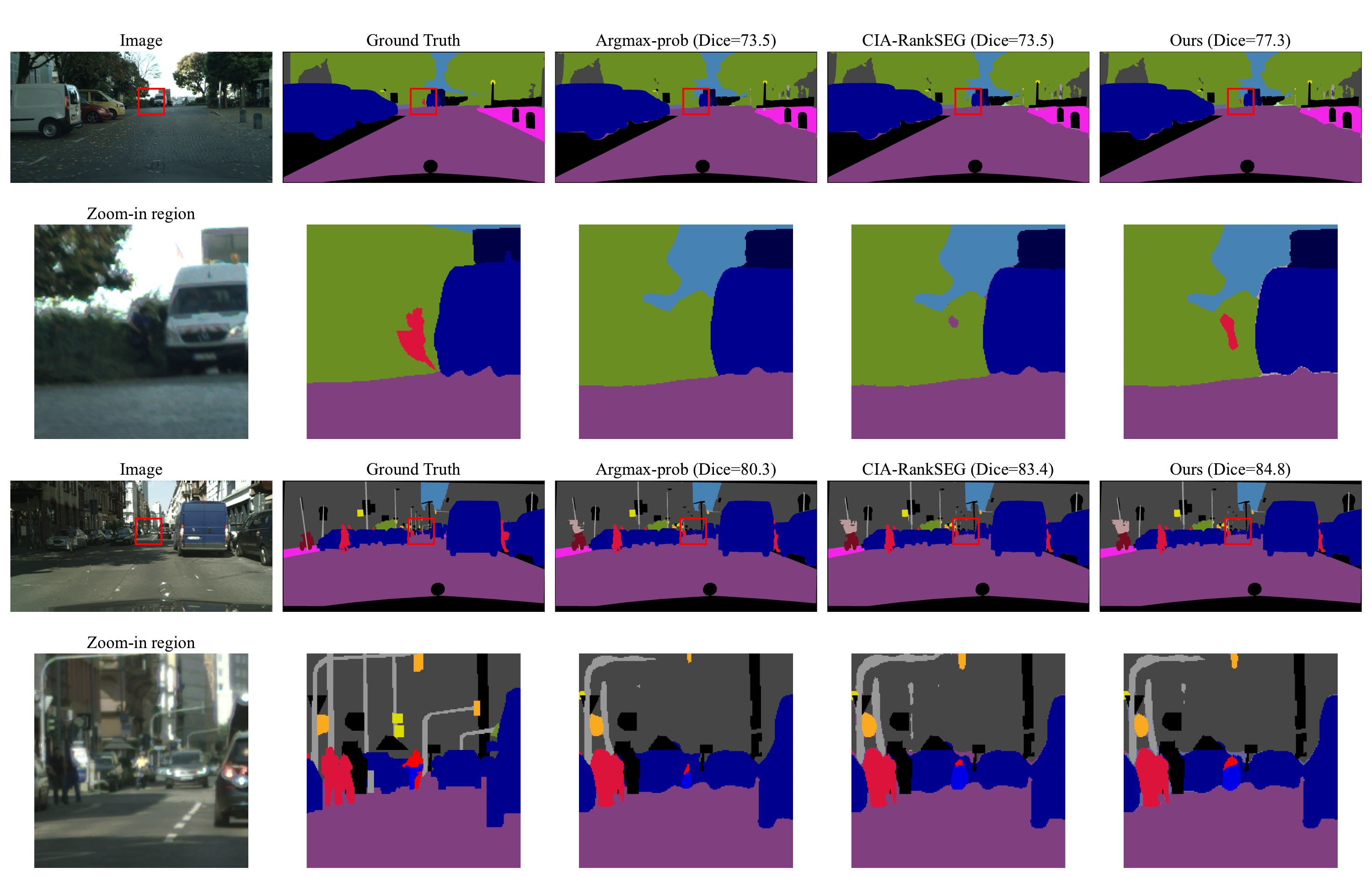}
  \caption{Qualitative comparison on Cityscapes. Each case spans two rows, where the first row shows the original image and the second row zooms in on the most distinctive region.}
  \label{fig:cityscapes_viz}
\end{figure}

\begin{figure}[htbp]
  \centering
  \includegraphics[width=\textwidth]{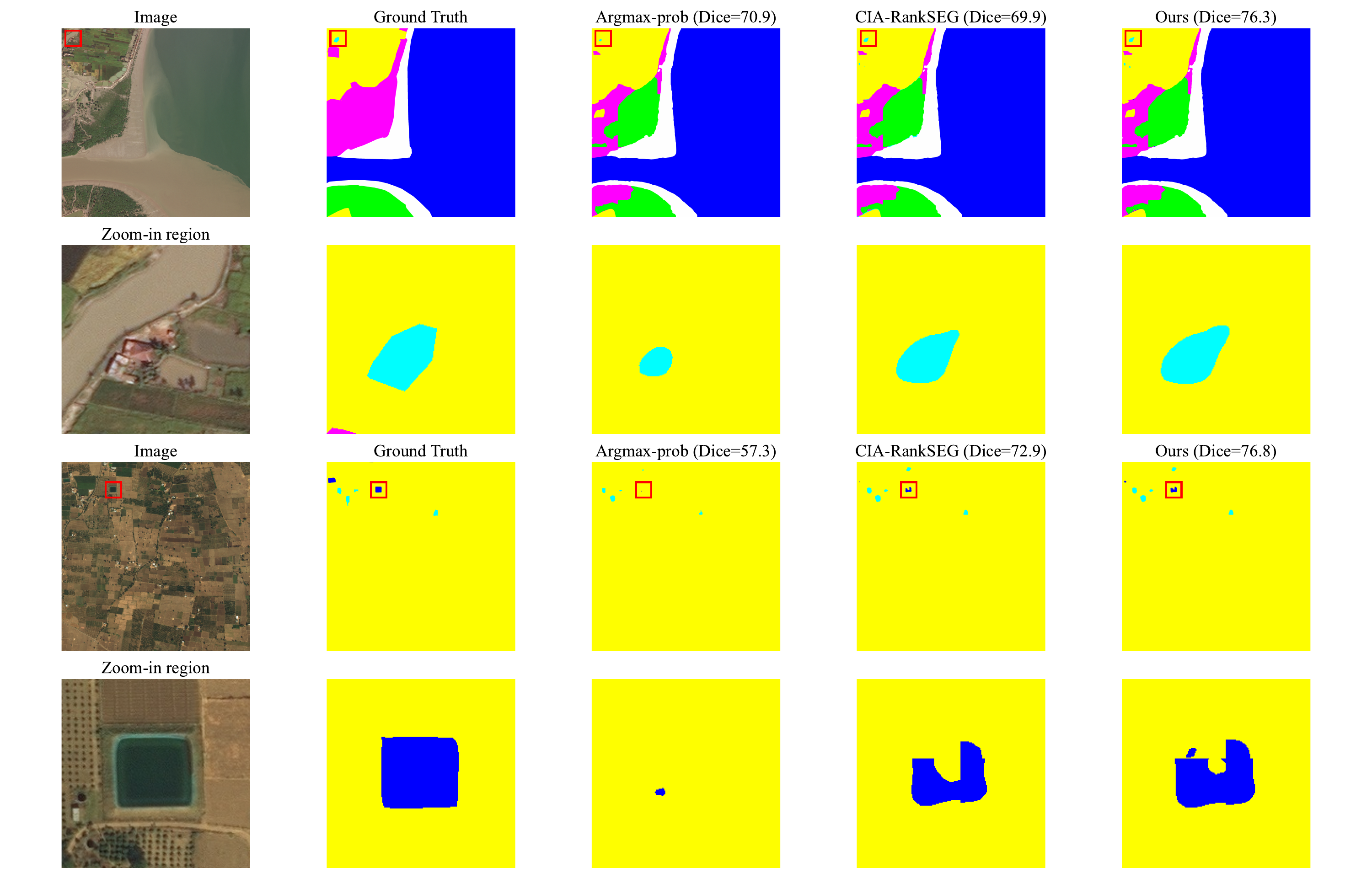}
  \caption{Qualitative comparison on DeepGlobe Land. Each case spans two rows, where the first row shows the original image and the second row zooms in on the most distinctive region.}
  \label{fig:deepglobe_viz}
\end{figure}

We complement the quantitative results in the main paper with qualitative visualizations. \cref{fig:lits_viz,fig:kits_viz} present representative examples from LiTS and KiTS, comparing the segmentation masks produced by different methods. In these medical datasets, the foreground objects (tumors) are often small and exhibit low contrast relative to the background. Such challenging conditions frequently lead to incomplete predictions by the baselines and require exploiting label dependence to recover the full object shape. In contrast, our method consistently produces more complete objects and more accurate shapes than the baselines, in line with the observed improvements in Dice and IoU scores.

\cref{fig:ade20k_viz,fig:cityscapes_viz,fig:deepglobe_viz} present examples from ADE20K, Cityscapes, and DeepGlobe Land. On these natural-image datasets, our method primarily improves the segmentation of thin and distant structures whose pixel footprint is too small for the baselines to resolve. For instance, in both examples in \cref{fig:cityscapes_viz}, our method successfully identifies or recovers more complete shapes of the distant human figures. Although these pedestrians occupy only a small number of pixels, accurately localizing them is critical for downstream tasks such as autonomous driving.

\end{document}